\tikzstyle axes label=[font=\footnotesize, text width=3cm, text centered]
\theoremstyle{plain}
\newtheorem{theorem}{Theorem}
\newtheorem{lem}{Lemma}
\theoremstyle{definition}
\newtheorem{definition2}{Definition}
\newenvironment{definition}
  {
   \pushQED{\qed}\begin{definition2}}
  {\popQED\end{definition2}}
\theoremstyle{definition}
\newtheorem{remark2}{Implementation remark}
\newenvironment{remark}
  {%
   \pushQED{\qed}\begin{remark2}}
  {\popQED\end{remark2}}
\newtheorem{assum2}{Assumption}
\newenvironment{assum}
  {%
   \pushQED{\qed}\begin{assum2}}
  {\popQED\end{assum2}}
\let\oldnl\nl
\newcommand{\nonl}{\renewcommand{\nl}{\let\nl\oldnl}}
\newcommand{\bftau}{\boldsymbol{\mathbf{\tau}}}
\renewcommand{\vec}[1]{\bm{\mathrm{#1}}}  
\newcommand{\cC}{\mathscr{C}}
\newcommand{\calX}{\mathcal{X}}
\newcommand{\calU}{\mathcal{U}}
\newcommand{\calK}{\mathcal{K}}
\newcommand{\rmend}{\mathrm{end}}
\newcommand{\bbI}{\mathbb{I}}
\newcommand{\calR}{\mathcal{R}}
\newcommand{\calQ}{\mathcal{Q}}
\newcommand{\calL}{\mathcal{L}}
\newcommand{\calP}{\mathcal{P}}
\newcommand{\calE}{\mathcal{E}}
\newcommand{\MVC}{\mathrm{MVC}}
\newcommand{\CLC}{\mathrm{CLC}}
\newcommand{\LC}{\mathrm{LC}}
\newcommand{\BLC}{\mathrm{BLC}}
\begin{document}

\title{\LARGE A New Approach to Time-Optimal Path Parameterization\\
  based on Reachability Analysis}

\author{\IEEEauthorblockN{Hung Pham, Quang-Cuong Pham}\\
  \IEEEauthorblockA{ATMRI, SC3DP, School of Mechanical and Aerospace Engineering\\
    Nanyang Technological University,  Singapore\\
    Email: pham0074@e.ntu.edu.sg, cuong.pham@normalesup.org} }


\maketitle

\begin{abstract}
  Time-Optimal Path Parameterization (TOPP) is a well-studied problem
  in robotics and has a wide range of applications. There are two main
  families of methods to address TOPP: Numerical Integration (NI) and
  Convex Optimization (CO). NI-based methods are fast but difficult to
  implement and suffer from robustness issues, while CO-based
  approaches are more robust but at the same time significantly
  slower. Here we propose a new approach to TOPP based on Reachability
  Analysis (RA). The key insight is to recursively compute reachable
  and controllable sets at discretized positions on the path by
  solving small Linear Programs (LPs). The resulting algorithm is
  faster than NI-based methods and as robust as CO-based ones (100\%
  success rate), as confirmed by extensive numerical
  evaluations. Moreover, the proposed approach offers unique
  additional benefits: Admissible Velocity Propagation and robustness
  to parametric uncertainty can be derived from it in a simple and
  natural way.
\end{abstract}

\IEEEpeerreviewmaketitle

\section{Introduction}
\label{sec:intro}

Time-Optimal Path Parameterization (TOPP) is the problem of finding
the fastest way to traverse a path in the configuration space of a
robot system while respecting the system
constraints~\cite{bobrow1985time}. This classical problem has a wide
range of applications in robotics. In many industrial processes
(cutting, welding, machining, 3D printing, etc.) or mobile robotics
applications (driverless cars, warehouse UGVs, aircraft taxiing, etc.),
the robot paths may be predefined, and optimal productivity implies
tracking those paths at the highest possible speed while respecting
the process and robot constraints. From a conceptual viewpoint, TOPP
has been used extensively as subroutine to kinodynamic motion planning
algorithms~\cite{SD91tra, pham2017admissible}. Because of its
practical and theoretical importance, TOPP has received considerable
attention since its inception in the 1980's,
see~\cite{pham2014general} for a recent review.

\subsection*{Existing approaches to TOPP}

There are two main families of methods to TOPP, based respectively on
Numerical Integration (NI) and Convex Optimization (CO). Each approach
has its strengths and weaknesses.

The NI-based approach was initiated by \cite{bobrow1985time}, and
further improved and extended by many researchers,
see~\cite{pham2014general} for a review. NI-based algorithms are based
on Pontryagin's Maximum Principle, which states that the time-optimal
path parameterization consists of alternatively maximally accelerating
and decelerating segments. The key advantage of this approach is that
the optimal controls can be explicitly computed (and not searched for
as in the CO approach) at each path position, resulting in extremely
fast implementations. However, this requires finding the switch points
between accelerating and decelerating segments, which constitutes a
major implementation difficulty as well as the main cause of
failure~\cite{pfeiffer1987concept,slotine1989improving,shiller1992computation,pham2014general}. Another
notable implementation difficulty is handling of velocity
bounds~\cite{Zla96icra}~\footnote{In a NI-based algorithm, to account
  for velocity bounds, one has to compute the direct Maximum Velocity
  Curve \(\mathrm{MVC}_{\mathrm{direct}}\), then find and resolve
  ``trap points''~\cite{Zla96icra}.  Implementing this procedure is
  tricky in practice because of accumulating numerical errors. This
  observation comes from our own experience with the TOPP
  library~\cite{pham2014general}.}. The formulation of the present
paper naturally removes those two difficulties.

The CO-based approach was initiated by~\cite{verscheure2008practical}
and further extended in~\cite{hauser2014}. This approach formulates
TOPP as a single large convex optimization program, whose optimization
variables are the accelerations and squared velocities at discretized
positions along the path. The main advantages of this approach are:
(i) it is simple to implement and robust, as one can use
off-the-shelf convex optimization packages; (ii) other convex
objectives than traversal time can be considered. On the downside, the
optimization program to solve is huge -- the number of variables and
constraint inequalities scale with the discretization step size --
resulting in implementations that are one order of magnitude slower
than NI-based methods~\cite{pham2014general}. This makes CO-based
methods inappropriate for online motion planning or as subroutine to
kinodynamic motion planners~\cite{pham2017admissible}.

\tikzstyle control set=[
  color={rgb:red, 85;green, 15; blue, 16}, >-<, very thick]
\tikzstyle greedy=[
  color={rgb:red, 31;green, 119; blue, 180}, ultra thick]
\tikzstyle greedy node=[
  fill={rgb:red, 31;green, 119; blue, 180}]
\begin{figure}[t]
  \hspace{-0.7cm}
  \begin{tikzpicture}
    \draw[->] (0, 0) -- + (6cm, 0);
    \draw[->] (0, 0) -- + (0cm, 3cm);
    \begin{scope}[x=20pt, y=20pt]
      \foreach \x in {0,1,...,8} {
        \draw[help lines, dashed] (\x, 0) -- (\x, 4);
        \node[axes label, anchor=north] at (\x, -0.15) {\x};
      }
      \draw [help lines] (8, 1.1) parabola (4, 4)
      (4, 4) -- (3, 2.5)
      (3, 2.5) parabola (0, 3.5);
      \draw [help lines] (8, 1.1) parabola (4, 0);

      \draw [thick]  (0, 1) -- +(-0.3, 0) ;
      \draw [thick]  (8, 1.1) -- +(+0.3, 0) ;

      \draw [control set ] (0, -0.1) -- (0, 3.6);
      \draw [control set ] (1, -0.1) -- (1, 3.15);
      \draw [control set ] (2, -0.1) -- (2, 2.85);
      \draw [control set ] (3, -0.1) -- (3, 2.75);
      \draw [control set ] (4, -0.1) -- (4, 4.1);
      \draw [control set ] (5, 0.3) -- (5, 2.95);
      \draw [control set ] (6, 0.65) -- (6, 2.05);
      \draw [control set ] (7, 0.88) -- (7, 1.5);
      \draw [control set ] (8, 0.9) -- (8, 1.3);

      \node [axes label] at   (8.5, 3.8) {(bw)};
      \draw [control set, ->] (7.9, 3.8) -- (7.1, 3.8);

      \node [axes label] at  (-.5, 3.8) {(fw)};
      \draw [greedy , <-] (0.9, 3.8) -- (0.1, 3.8);

      \draw[greedy node] (0, 1   ) node (l0) {} circle (2pt);
      \draw[greedy node] (1, 2.9) node (l1) {} circle (2pt);
      \draw[greedy node] (2, 2.6 ) node (l2) {} circle (2pt);
      \draw[greedy node] (3, 2.5) node (l3) {} circle (2pt);
      \draw[greedy node] (4, 2.4 ) node (l4) {} circle (2pt);
      \draw[greedy node] (5, 2.7) node (l5) {} circle (2pt);
      \draw[greedy node] (6, 1.8) node (l6) {} circle (2pt);
      \draw[greedy node] (7, 1.25) node (l7) {} circle (2pt);
      \draw[greedy node] (8, 1.10) node (l8) {} circle (2pt);
      \draw [greedy, densely dashed] (l0) --
                     (l1) --
                     (l2) --
                     (l3) --
                     (l4) --
                     (l5) --
                     (l6) --
                     (l7) --
                     (l8) ;

      \node [axes label, anchor=north] at (4, -0.7) {$s$};
      \node [axes label, anchor=east, text width=0] at (-0.3, 2)
      {$\dot s$};
      \node [axes label, anchor=east, text width=0] at (-0.7, 1)
      {$\dot s_0^2$};
      \node [axes label, anchor=west, text width=0] at (8.3, 1.1) {$\dot
        s_N^2$};

    \end{scope}
  \end{tikzpicture}
  \caption{\label{fig:topp-ra} Time-Optimal Path Parameterization by
    Reachability Analysis (TOPP-RA) computes the optimal
    parameterization in two passes. In the first pass (backward),
    starting from the last grid point $N$, the algorithm computes
    controllable sets (red intervals) recursively. In the second pass
    (forward), starting now from grid point $0$, the algorithm
    greedily selects the highest controls such that resulting
    velocities remain inside the respective controllable sets.}
\end{figure}
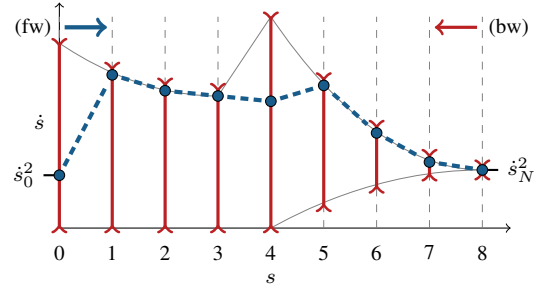

\subsection*{Proposed new approach based on Reachability Analysis}

In this paper, we propose a new approach to TOPP based on Reachability
Analysis (RA), a standard notion from control theory. The key insight
is: given an interval of squared velocities $\bbI_s$ at some position $s$ on
the path, the \emph{reachable} set $\bbI_{s+\Delta}$ (the set of all
squared velocities at the next path position that can be reached from $\bbI_s$
following admissible controls) and the \emph{controllable} set
$\bbI_{s-\Delta}$ (the set of all squared velocities at the previous path
position such that there exists an admissible control leading to a
velocity in $\bbI_s$) can be computed quickly and robustly by solving
a few \emph{small} Linear Programs (LPs). By recursively computing
controllable sets at discretized positions on the path,
one can then extract the time-optimal parameterization in time
$O(mN)$, where $m$ is the number of constraint inequalities and $N$
the discretization grid size, see Fig.~\ref{fig:topp-ra} for an
illustration.

As compared to NI-based methods, the proposed approach has therefore a
better time complexity (actual computation time is similar for problem
instances with few constraints, and becomes significantly faster for
instances with $>22$ constraints). More importantly, the proposed
method is much easier to implement and has a success rate of $100\%$,
while state-of-the-art NI-based implementations
(e.g.~\cite{pham2014general}) comprise thousands of lines of code and
still report failures on hard problem instances. As compared to
CO-based methods, the proposed approach enjoys the same level of
robustness and of ease-of-implementation while being significantly
faster.

Besides the gains in implementation robustness and performance,
viewing the classical TOPP problem from the proposed new perspective
yields the following additional benefits:
\begin{itemize}
\item constraints for redundantly-actuated systems are handled
  natively: there is no need to project the constraints to the plane
  (path acceleration $\times$ control) at each path position, as done
  in~\cite{hauser2014,pham2015time};
\item Admissible Velocity Propagation~\cite{pham2017admissible}, a
  recent concept for kinodynamic motion planning (see
  Section~\ref{sec:avp} for a brief summary), can be derived ``for
  free'';
\item robustness to parametric uncertainty, e.g.\ uncertain
  coefficients of friction or uncertain inertia matrices, can be
  obtained in a natural way.
\end{itemize}
More details regarding the benefits as well as definitions of relevant
concepts will be given in Section~\ref{sec:optimal-path-policy}.

\subsection*{Organization of the paper}

The rest of the paper is organized as follows.
Section~\ref{sec:nomi-path-param} formulates the TOPP problem in a
general setting. Section~\ref{sec:reach-analys-} applies Reachability
Analysis to the path-projected
dynamics. Section~\ref{sec:time-optimal-profile} presents the core
algorithm to compute the time-optimal path
parameterization. Section~\ref{sec:experiments} reports extensive
experimental results to demonstrate the gains in robustness and
performance permitted by the new
approach. Section~\ref{sec:optimal-path-policy} discusses the
additional benefits mentioned previously: Admissible Velocity
Propagation and robustness to parametric uncertainty. Finally,
Section~\ref{sec:discussion} offers some concluding remarks and
directions for future research.

\section{Problem formulation}
\label{sec:nomi-path-param}

\subsection{Generalized constraints}
\label{sec:path-param-probl}

Consider a $n$-dof robot system, whose configuration is denoted by a
$n$ dimensional vector $\mathbf{q} \in \mathbb{R}^{n}$.  A
\emph{geometric path} $\mathcal{P}$ in the configuration space is
represented as a function $\mathbf{q}(s)_{s\in[0, s_\rmend]}$. We
assume that $\mathbf{q}(s)$ is piece-wise
$\mathcal{C}^{2}$-continuous. A \emph{time parameterization} is a
piece-wise $\mathcal{C}^2$, increasing scalar function
$s:[0, T] \rightarrow [0, s_\rmend]$, from which a \emph{trajectory}
is recovered as $\vec q (s(t))_{t\in[0, T]}$.

In this paper, we consider \emph{generalized second-order
  constraints} of the following form~\cite{hauser2014,pham2015time}
\begin{equation}
  \label{eq:general-form}
  \mathbf{A(q)} \ddot{\mathbf{q}} +
  \dot{\mathbf{q}}^{\top}\mathbf{B(q)}\dot{\mathbf{q}} +
  \mathbf{f(q)} \in \cC(\vec q),\ \textrm{where}
\end{equation}
\begin{itemize}
\item $\mathbf{A, B, f}$ are continuous mappings from $\mathbb{R}^n$
  to $\mathbb{R}^{m\times n},\mathbb{R}^{n\times m\times n}$ and
  $\mathbb{R}^{m}$ respectively;
\item $\cC(\vec q)$ is a convex polytope in $\mathbb{R}^{m}$.
\end{itemize}

\begin{remark}
The above form is the most general in the TOPP literature to date,
and can account for many types of kinodynamic constraints, including
velocity and acceleration bounds, joint torque bounds for fully- or
redundantly-actuated robots~\cite{pham2015time}, contact stability
under Coulomb friction
model~\cite{hauser2014,caron2015leveraging,Caron2017}, etc.

Consider for instance the torque bounds on a fully-actuated
manipulator
\begin{align}
  \label{eq:manip}
  &\vec M(\vec q)\ddot{\vec q}+
  \dot{\vec q}^\top\vec C(\vec q)\dot{\vec q}+
  \vec g(\vec q)=\bftau, \\
  &\tau^{\min}_i \leq \tau_i(t) \leq \tau^{\max}_i, \ \forall
  i\in[1,\dots,n],\ t\in [0,T]
\end{align}
This can be rewritten in the form of~\eqref{eq:general-form} with $\vec A:= \vec
M$, $\vec B:=\vec C$, $\vec f:=\vec g$ and
\[
\cC(\vec q):= [\tau^{\min}_1,\tau^{\max}_1] \times
\dots \times [\tau^{\min}_n,\tau^{\max}_n],
\]
which is clearly convex.

For redundantly-actuated manipulators, it was shown that the TOPP
problem can also be formulated in the form
of~\eqref{eq:general-form}~\cite{pham2015time} with
\[
\cC(\vec q):= \vec S^\top \left([\tau^{\min}_1,\tau^{\max}_1] \times
\dots \times [\tau^{\min}_n,\tau^{\max}_n]\right),
\]
where $\vec S$ is a linear transformation~\cite{pham2015time}, which
implies that the so-defined $\cC(\vec q)$ is a convex polytope.

In legged robots, the TOPP problem under contact-stability constraints
where the friction cones are linearized was shown to be reducible to
the form of~\eqref{eq:general-form} with $\cC(\vec q)$ being also a
\emph{convex
  polytope}~\cite{hauser2014,pham2015time,caron2015leveraging}.

If the friction cones are not linearized, then $\cC(\vec q)$ is still
convex, but not polytopic. The developments in the present paper that
concern reachable and controllable sets
(Section~\ref{sec:reach-analys-}) are still valid in the convex,
non-polytopic case. The developments on time-optimality
(Section~\ref{sec:time-optimal-profile}) is however only applicable to
the polytopic case.
\end{remark}



Finally, we also consider first-order constraints of the form
\begin{equation*}
  \mathbf{A}^v(\vec q) \dot{\mathbf{q}} +
  \mathbf{f}^v(\vec q) \in \cC^v(\vec q),
\end{equation*}
where the coefficients are matrices of appropriate sizes and
$\cC^v(\vec q)$ is a convex set. Direct velocity bounds and momentum
bounds are examples of first-order constraints.

\subsection{Projecting the constraints on the path}
\label{sec:proj-constr-geom}

Differentiating successively $\vec q(s)$, one has
\begin{equation}
  \label{eq:differentiate-q}
  \dot {\mathbf{q}} = \mathbf{q}'\dot s, \quad
  \ddot {\mathbf{q}} = \mathbf{q}''\dot s^2 + \mathbf{q}'\ddot s,
\end{equation}
where $\Box'$ denotes differentiation with respect to the path
parameter $s$. From now on, we shall refer to $s, \dot s, \ddot s$ as
the position, velocity and acceleration respectively.

Substituting Eq.~\eqref{eq:differentiate-q} to Eq.~\eqref{eq:general-form}, one transforms
second-order constraints on the system dynamics into constraints on
$s, \dot s, \ddot s$ as follows
\begin{equation}
  \label{eq:lp-in-path}
    \mathbf{a}(s) \ddot s + \mathbf{b}(s) \dot s^2+ \mathbf{c}(s) \in
    \cC(s),\ \textrm{where}
\end{equation}
\begin{equation*}
  \begin{aligned}
    \mathbf{a}(s)&:=\vec A (\vec q (s)) \vec{q}'(s),\\
    \mathbf{b}(s)&:=\vec A (\vec q (s))\vec{q}''(s) +
    \vec{q}'(s)^\top \vec B (\vec q (s))\vec{q}'(s),\\
    \mathbf{c}(s)&:= \vec f (\vec q (s)), \\
    \cC(s) &:= \cC(\vec q(s)).
  \end{aligned}
\end{equation*}


Similarly, first-order constraints are transformed into
\begin{equation}
  \label{eq:lp-in-path-direct}
  \mathbf{a}^v(s) \dot s + \mathbf{b}^v(s) \in \cC^v(s),\ \textrm{where}
\end{equation}
\begin{equation*}
  \begin{aligned}
    \mathbf{a}^v(s)&:=\vec A^v (\vec q (s)) \vec{q}'(s),\\
    \mathbf{b}^v(s)&:= \vec f^v (\vec q (s)),\\
    \cC^v(s) &:= \cC^v(\vec q(s)).
  \end{aligned}
\end{equation*}


\subsection{Path discretization}
\label{sec:discr-path-dynam}

As in the CO-based approach, we divide the interval
$[0, s_{\rmend}]$ into $N$ segments and $N+1$ grid points
\begin{equation*}
  0=:s_0, s_1 \dots s_{N-1}, s_N:= s_{\rmend}.
\end{equation*}

Denote by $u_i$ the constant path acceleration over the interval
$[s_i, s_{i+1}]$ and by $x_i$ the squared velocity $\dot s_i^2$ at
$s_i$. By simple algebraic manipulations, one can show that the
following relation holds
\begin{equation}
  \label{eq:lin-rls}
  x_{i+1} = x_i + 2 \Delta_i u_i, \quad i = 0\dots N-1,
\end{equation}
where $\Delta_i:= s_{i+1}-s_{i}$. In the sequel we refer to $s_i$ as
the $i$-stage, $u_i$ and $x_i$ as respectively the control and state
at the $i$-stage. Any sequence
$x_0, u_0, \dots, x_{N-1}, u_{N-1}, x_N$ that satisfies the linear
relation~\eqref{eq:lin-rls} is referred to as a path parameterization.

A parameterization is \emph{admissible} if it satisfies the
constraints at every points in $[0, s_{\rmend}]$. One possible way to
bring this requirement into the discrete setting is through a
\emph{collocation} discretization scheme: for each position $s_i$, one
evaluates the continuous constraints and requires the control and
state $u_i, x_i$ to verify
\begin{equation}
  \label{eq:gen-form}
    \mathbf{a}_i u_i + \mathbf{b}_i x_i+ \mathbf{c}_i \in \cC_i,
\end{equation}
where
$\vec a_i:=\vec a(s_i), \vec b_i:=\vec b(s_i), \vec c_i:=\vec c(s_i),
\cC_i:=\cC(s_i)$.

Since the constraints are enforced only at a finite number of points,
the actual continuous constraints might not be respected everywhere
along $[0, s_\rmend]$\,\footnote{This limitation is however not
  specific to the proposed approach as both the NI and CO approaches
  require discretization at some stages of the algorithm.}. Therefore,
it is important to bound the constraint satisfaction error. We show in
Appendix~\ref{sec:discr-scheme-error} that the collocation scheme has
an error of order $O(\Delta_i)$. Appendix~\ref{sec:discr-scheme-error}
also presents a first-order interpolation discretization scheme, which
has an error of order $O(\Delta_i^2)$ but which involves more
variables and inequality constraints than the collocation scheme.

\section{Reachability Analysis of the path-projected dynamics}
\label{sec:reach-analys-}

The key to our analysis is that the ``path-projected
dynamics''~\eqref{eq:lin-rls},~\eqref{eq:gen-form} is a
\emph{discrete-time linear system} with \emph{linear control-state
  inequality constraints}. This observation immediately allows us to
take advantage of the set-membership control problems studied in the
Model Predictive Control (MPC)
literature~\cite{kerrigan2001robust,Bertsekas1971,rakovic2006reachability}.


\subsection{Admissible states and controls}
\label{sec:useful-definitions}

We first need some definitions. Denote the $i$-stage set of
\emph{admissible} control-state pairs by
\begin{equation*}
  \Omega_i:= \{(u, x) \mid \mathbf{a}_i u + \mathbf{b}_i x+ \mathbf{c}_i \in \cC_i\}.
\end{equation*}
One can see $\Omega_i$ as the projection of $\cC_i$ on the $(\ddot
s,\dot s^2)$ plane~\cite{hauser2014}. Since $\cC_i$ is a polytope,
$\Omega_i$ is a \emph{polygon}. Algorithmically, the projection can be
obtained by e.g.\ the recursive expansion
algorithm~\cite{bretl2008tro}.

Next, the $i$-stage set of \emph{admissible states} is the projection of
$\Omega_{i}$ on the second axis
\begin{equation*}
  \calX_i:= \{x \mid \exists u: (u,x) \in \Omega_i \}.
\end{equation*}
The $i$-stage set of \emph{admissible controls} given a state $x$ is
\begin{equation*}
  \calU_i(x):= \{u \mid (u, x) \in \Omega_i \}.
\end{equation*}

Note that, since $\Omega_i$ is convex, both $\calX_i$ and $\calU_i(x)$
are \emph{intervals}.

Classic terminologies in the TOPP literature (e.g. Maximum Velocity
Curve, $\alpha$ and $\beta$ acceleration fields, etc.) can be
conveniently expressed using these definitions. See the first part of
Appendix~\ref{sec:relat-with-numer} for more details.

\begin{remark}
  \label{rem:redundant}
 For redundantly-actuated manipulators and contact-stability of legged
 robots, both NI-based and CO-based methods must compute $\Omega_i$ at
 each discretized position $i$ along the path, which is costly. Our
 proposed approach avoids performing this 2D projection: instead, it
 will only require a few 1D projections per discretization
 step. Furthermore, each of these 1D projections amounts to a pair of
 LPs and can therefore be performed extremely quickly.
\end{remark}

\subsection{Reachable sets}
\label{sec:reachable-sets}



The key notion in Reachability Analysis is that of $i$-stage reachable
set.

\begin{definition}[$i$-stage reachable set]
  Consider a set of starting states $\bbI_0$. The \emph{$i$-stage
    reachable set} $\mathcal{L}_i(\bbI_0)$ is the set of states
  $x\in\calX_i$ such that there exist a state $x_0\in \bbI_0$ and a
  sequence of admissible controls $u_0,\dots, u_{i-1}$ that steers the
  system from $x_0$ to $x$.
\end{definition}

To compute the $i$-stage reachable set, one needs the following
intermediate representation.
\begin{definition}[Reach set]
  Consider a set of states $\bbI$. The \emph{reach set}
  $\calR_i(\bbI)$ is the set of states $x \in \calX_{i+1}$ such that
  there exist a state $\tilde{x}\in\bbI$ and an admissible control
  $u\in\calU_i(\tilde{x})$ that steers the system from $\tilde{x}$ to
  $x$, i.e.
  \[
  x = \tilde{x} + 2\Delta_i u. \qedhere
  \]
\end{definition}

\begin{remark}
  \label{rem:convex}
  Let us note
  $\Omega_i(\bbI) := \{(u,\tilde{x})\in\Omega_i \mid
  \tilde{x}\in\bbI\}$.
  If $\bbI$ is convex, then $\Omega_i(\bbI)$ is convex as the
  intersection of two convex sets.  Next, $\calR_i(\bbI)$ can be seen
  as the intersection of the projection of $\Omega_i(\bbI)$ onto a line
  and the interval $\calX_{i+1}$. Thus,
  $\calR_i(\bbI)$ is an interval, hence defined by its lower and upper
  bounds $(x^-,x^+)$, which can be computed as follows
  \[
  x^- := \min_{(u,\tilde{x})\in\Omega_i(\bbI),\; x^- \in \calX_{i+1}} \tilde{x} + 2\Delta_i u,
  \]
  \[
  x^+ := \max_{(u,\tilde{x})\in\Omega_i(\bbI),\; x^+ \in \calX_{i+1}} \tilde{x} + 2\Delta_i u.
  \]
  Since $\Omega_i(\bbI)$ is a polygon, the above equations constitute
  two LPs. Note finally that there is no need to compute
  explicitly $\Omega_i(\bbI)$, since one can write directly
  \[
    \begin{aligned}
      &x^+ := \max_{(u,\tilde{x})\in\mathbb{R}^2} \tilde{x} + 2\Delta_i u, \\
      \textrm{subject to:~~} &\mathbf{a}_i u + \mathbf{b}_i \tilde{x}+ \mathbf{c}_i
      \in \cC_i, \tilde{x}\in\bbI  \textrm{~and~} x^{+} \in \calX_{i+1},
    \end{aligned}
  \]
  and similarly for $x^-$.
\end{remark}

The $i$-stage reachable set can be recursively computed by
\begin{equation}
  \label{eq:rch-com}
  \begin{aligned}
    \calL_0(\bbI_0) &= \bbI_0 \cap \calX_0,\\
    \calL_i(\bbI_0) &= \calR_{i-1}(\calL_{i-1}(\bbI_0)).
  \end{aligned}
\end{equation}

\begin{remark}
  \label{rem:reachable}
  If $\bbI_0$ is an interval, then by recursion and by application of
  Implementation remark~\ref{rem:convex}, all the $\calL_i$ are
  intervals. Each step of the recursion requires solving two LPs for
  computing $\calR_{i-1}(\calL_{i-1}(\bbI_0))$. Therefore, $\calL_i$
  can be computed by solving $2i + 2$ LPs.
\end{remark}

The $i$-stage reachable set may be empty, which implies that the
system can not evolve without violating constraints: the path is not
time-parameterizable. One can also note that
\begin{equation*}
  \calL_i(\bbI_0) = \emptyset \implies   \forall j \geq i,\ \calL_j (\bbI_0) =
  \emptyset.
\end{equation*}

\subsection{Controllable sets}
\label{sec:controllable-sets}

Controllability is the dual notion of reachability, as made clear by
the following definitions.

\begin{definition}[$i$-stage controllable set]
  Consider a set of desired ending states $\bbI_N$. The
  \emph{$i$-stage controllable set} $\mathcal{K}_i(\bbI_N)$ is the set
  of states $x\in\calX_i$ such that there exist a state
  $x_N\in \bbI_N$ and a sequence of admissible controls
  $u_i,\dots, u_{N-1}$ that steers the system from $x$ to $x_N$.
\end{definition}

The dual notion of ``reach set'' is that of ``one-step'' set.
\begin{definition}[One-step set]
  Consider a set of states $\bbI$. The \emph{one-step set}
  $\calQ_i(\bbI)$ is the set of states $x\in\calX_{i}$ such that there exist a
  state $\tilde{x}\in\bbI$ and an admissible control $u\in\calU_i(x)$ that
  steers the system from $x$ to $\tilde{x}$, i.e.
  \[
  \tilde{x} = x + 2\Delta_i u. \qedhere
  \]
\end{definition}

The $i$-stage controllable set can now be computed recursively by
\begin{equation}
  \label{eq:ctrl-comp}
  \begin{aligned}
    \calK_N(\bbI_N) &= \bbI_N \cap \calX_N,\\
    \calK_i(\bbI_N) &= \calQ_i(\calK_{i+1}(\bbI_N)).
  \end{aligned}
\end{equation}

\begin{remark}
\label{rem:controllable}
Similar to Implementation remark~\ref{rem:reachable},
every one-step set $\calQ_i(\bbI)$ is an interval, whose lower and
upper bounds $(x^-,x^+)$ are given by the following two LPs
\[
  \begin{aligned}
&x^+ := \max_{(u,x)\in\mathbb R^2} x, \\
\textrm{subject to:~} &\mathbf{a}_i u + \mathbf{b}_i x+ \mathbf{c}_i
\in \cC_i \ \textrm{~and~}  x + 2\Delta_i u \in\bbI,
  \end{aligned}
\]
and similarly for $x^-$. Thus, computing the $i$-stage controllable
set will require solving $2 (N - i) + 2$ LPs.
\end{remark}

The $i$-stage controllable set may be empty, in that case, the path is
not time-parameterizable. One also has
\begin{equation*}
  \calK_i(\bbI_N) = \emptyset \implies \forall j \leq
  i,\ \calK_j(\bbI_N) = \emptyset.
\end{equation*}

\section{TOPP by Reachability Analysis}
\label{sec:time-optimal-profile}

\subsection{Algorithm}

Armed with the notions of reachable and controllable sets, we can now
proceed to solving the TOPP problem. The Reachability-Analysis-based
TOPP algorithm (TOPP-RA) is given in Algorithm~\ref{algo:topp} below
and illustrated in Fig.~\ref{fig:topp-ra}.

\begin{algorithm}[h]
\label{algo:topp}
  \caption{TOPP-RA}
  \Input{Path $\calP$, starting and ending velocities $\dot s_0, \dot s_N$} %
  \Output{Parameterization $x_0^*, u_0^*, \dots, u_{N-1}^*, x_N^*$} %
  \tcc{Backward pass: compute the controllable sets}
  $\calK_N := \{\dot s_N^{2}\}$ \;
  \For{$i \in [N-1 \dots 0]$}{
    $\calK_i := \calQ_i(\calK_{i+1})$
  }
  \If{$\calK_0 = \emptyset$ \rm{or} $\dot s_0^{2} \notin \calK_0$}{
    \Return \texttt{Infeasible}
  }
  \tcc{Forward pass: select controls greedily}
  $x^*_0 := \dot s_0^{2}$ \;
  \For{$i \in [0 \dots N-1]$}{
    $u_{i}^* := \max u$,
    subject to: $x^*_i + 2 \Delta_iu \in
    \calK_{i+1}$ and $(u, x^*_i) \in \Omega_i$  \;
    $x_{i+1}^* := x_{i}^* + 2 \Delta_{i} u_{i}^*$
  }
\end{algorithm}

The algorithm proceeds in two passes. The first pass goes backward: it
recursively computes the controllable sets $\calK_i(\{\dot s_N^2\})$
given the desired ending velocity $\dot s_N$, as described in
Section~\ref{sec:controllable-sets}. If any of the controllable sets
is empty or if the starting state $\dot s_0^{2}$ is not contained in
the 0-stage controllable set, then the algorithm reports failure. 

Otherwise, the algorithm proceeds to a second, forward, pass. Here,
the optimal states and controls are constructed \emph{greedily}: at
each stage $i$, the highest admissible control $u$ such that the
resulting next state belongs to the $(i+1)$-stage controllable set is
selected.

Note that one can construct a ``dual version'' of TOPP-RA as
follows: (i) in a forward pass, recursively compute the $i$-stage
reachable sets, $i\in[0,\dots,N]$; (ii) in a backward pass, greedily
select, at stage $i$, the lowest control such that the 
previous state belongs to the $(i-1)$-stage reachable set. 

In the following sections, we show the correctness and optimality of
the algorithm and give a more detailed complexity analysis.

\subsection{Correctness of TOPP-RA}
\label{sec:feasibility-topp-ra}

We show that TOPP-RA is correct in the sense of the following
theorem. 

\begin{theorem}
  Consider a discretized TOPP instance. TOPP-RA returns an admissible
  parameterization solving that instance whenever one exists, and
  reports \rm{\texttt{Infeasible}} otherwise.
\end{theorem}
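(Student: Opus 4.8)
The plan is to prove the two directions of the ``whenever one exists'' claim separately, using the recursive structure of the controllable sets established in Section~\ref{sec:controllable-sets}. First I would establish the key invariant characterizing membership in controllable sets: by the definition of the $i$-stage controllable set $\calK_i(\{\dot s_N^2\})$ together with the recursion~\eqref{eq:ctrl-comp}, a state $x\in\calX_i$ lies in $\calK_i$ if and only if there exists an admissible control $u\in\calU_i(x)$ such that $x+2\Delta_i u\in\calK_{i+1}$. This ``one-step reachability into the next controllable set'' property is exactly the handle needed for both directions, and it follows directly from the definition of the one-step set $\calQ_i$.

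For the \emph{soundness} direction (the returned parameterization is admissible), I would argue by induction on $i$ that the forward pass maintains the invariant $x_i^*\in\calK_i(\{\dot s_N^2\})$. The base case holds because the algorithm only proceeds past the feasibility check when $\dot s_0^2\in\calK_0$, so $x_0^*=\dot s_0^2\in\calK_0$. For the inductive step, since $x_i^*\in\calK_i=\calQ_i(\calK_{i+1})$, the one-step characterization guarantees the optimization defining $u_i^*$ has a nonempty feasible set; the chosen $u_i^*$ then satisfies $(u_i^*,x_i^*)\in\Omega_i$ (so the discretized constraint~\eqref{eq:gen-form} holds at stage $i$) and $x_{i+1}^*=x_i^*+2\Delta_i u_i^*\in\calK_{i+1}$, closing the induction. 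At $i=N$ we get $x_N^*\in\calK_N=\{\dot s_N^2\}$, so the boundary condition is met, and the relation~\eqref{eq:lin-rls} holds by construction of $x_{i+1}^*$; hence the output is an admissible parameterization.

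For the \emph{completeness} direction (if an admissible parameterization exists, the algorithm does not report \texttt{Infeasible}), I would take any admissible parameterization $x_0,u_0,\dots,x_N$ with $x_0=\dot s_0^2$, $x_N=\dot s_N^2$, and show by \emph{backward} induction on $i$ that $x_i\in\calK_i(\{\dot s_N^2\})$. The base case $x_N=\dot s_N^2\in\calK_N$ is immediate. For the step, admissibility gives $(u_i,x_i)\in\Omega_i$ and $x_{i+1}=x_i+2\Delta_i u_i$, and the inductive hypothesis gives $x_{i+1}\in\calK_{i+1}$; the one-step characterization of $\calQ_i$ then yields $x_i\in\calQ_i(\calK_{i+1})=\calK_i$. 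In particular $\dot s_0^2=x_0\in\calK_0$, so $\calK_0\neq\emptyset$ and $\dot s_0^2\in\calK_0$, meaning the feasibility check passes and the algorithm proceeds to the forward pass — which by the soundness argument then returns an admissible parameterization. Conversely, if no admissible parameterization exists, the soundness argument shows the algorithm cannot return one, so it must report \texttt{Infeasible}; this also requires noting that the forward-pass optimization is always feasible once $x_i^*\in\calK_i$ with $\calK_i\subseteq\calX_i$ nonempty, which is precisely the point guaranteed by the one-step characterization.

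The main obstacle I anticipate is making the one-step characterization fully rigorous at the boundary of the sets and in the (possibly) empty-set cases: one must be careful that $\calQ_i(\bbI)$ as defined includes the constraint $x\in\calX_i$, that $\calK_{i+1}=\emptyset$ propagates correctly (using the stated implication $\calK_i=\emptyset\Rightarrow\calK_j=\emptyset$ for $j\le i$), and that the ``$\max u$'' in the forward pass is attained — which holds because $\Omega_i$ is a polygon and the feasible set of the forward-pass LP is a nonempty compact interval in $u$ whenever $x_i^*\in\calK_i$. Everything else is a routine two-sided induction.
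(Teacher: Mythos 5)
Your proposal is correct and follows essentially the same two-part structure as the paper's own proof: a backward induction showing that any admissible parameterization has $x_i\in\calK_i$ (so the feasibility check cannot fail when a solution exists), and a forward induction showing the greedy pass maintains $x_i^*\in\calK_i$ and hence outputs an admissible parameterization. You actually spell out more detail than the paper does for the forward direction (nonemptiness and attainment of the forward-pass LP), which the paper leaves as "easily shown"; no gaps.
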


\begin{proof}
  (1) We first show that, if TOPP-RA reports \texttt{Infeasible}, then
  the instance is indeed not parameterizable. By contradiction, assume
  that there exists an admissible parameterization $\dot s_0^2 = x_0,
  u_0, \dots, u_{N-1}, x_N = \dot s_N^2$. We now show by backward
  induction on $i$ that $\calK_i$ contains at least $x_i$.

  \textrm{Initialization}: $\calK_N$ contains $x_N$ by construction.

  \textrm{Induction}: Assume that $\calK_i$ contains $x_i$. Since the
  parameterization is admissible, one has $x_i=x_{i-1}+2\Delta_iu_{i-1}$
  and $(u_{i-1},x_{i-1})\in\Omega_{i-1}$. By definition of the
  controllable sets, $x_{i-1}\in\calK_{i-1}$.

  We have thus shown that none of the $\calK_i$ is empty and that
  $\calK_0$ contains at least $x_0 = \dot s_0^2$, which implies that
  TOPP-RA cannot report \texttt{Infeasible}.

  (2) Assume now that TOPP-RA returns a sequence
  $(x_0^*, u_0^*, \dots, u_{N-1}^*, x_N^*)$. One can easily show by
  forward induction on $i$ that the sequence indeed constitutes an
  admissible parameterization that solves the instance.
\end{proof}

\subsection{Asymptotic optimality of TOPP-RA}
\label{sec:proof-correctness}

We show the following result: as the discretization step size goes to
zero, the cost, i.e.~traversal time, of the parameterization returned
by TOPP-RA converges to the optimal value.

Unsurprisingly, the main difficulty with proving asymptotic optimality
comes from the existence of zero-inertia
points~\cite{pfeiffer1987concept,shiller1992computation,pham2014general}. Note
however this difficulty does not affect the robustness or
the correctness of the algorithm.

To avoid too many technicalities, we make the following assumption.



\begin{assum}[and definition]
\label{assum:1}
There exist piece-wise $\mathcal{C}^{1}$-continuous functions
$\tilde{\vec a}(s)_{s\in [0, 1]}, \tilde{\vec b}(s)_{s\in [0, 1]},
\tilde{\vec c}(s)_{s\in [0, 1]}$ such that for all
$i\in \{0,\dots,N\}$, the set of admissible control-state pairs is
given by
\begin{equation*}
  \Omega_{i} = \{(u,x)\mid u\tilde{\vec a}(s_{i}) + x\tilde{\vec
    b}(s_{i}) + \tilde{\vec c}(s_{i}) \leq 0\}.
\end{equation*}
Augment $\tilde{\vec a},\tilde{\vec b},\tilde{\vec c}$ into
$\bar{\vec a},\bar{\vec b},\bar{\vec c}$ by adding two inequalities
that express the condition $x+2\Delta_{i}u\in{\cal K}_{i+1}$. The set of admissible
\emph{and controllable} control-state pairs is given by
\begin{equation*}
  \Omega_{i}\cap (\mathbb{R}\times \mathcal{K}_{i} )= \{(u,x)\mid u\bar{\vec a}(s_{i}) + x\bar{\vec
    b}(s_{i}) + \bar{\vec c}(s_{i}) \leq 0\}.\qedhere
\end{equation*}
\end{assum}

The above assumption is easily verified in the canonical case of a
fully-actuated manipulator subject to torque bounds tracking a smooth
path. It allows us to next easily define zero-inertia points.

\begin{definition}[Zero-inertia points]
  A point $s^\bullet$ constitutes a zero-inertia point if there is a
  constraint $k$ such that  \mbox{$\bar{\vec a}(s^\bullet)[k]=0$}.
\end{definition}



We have the following theorem, whose proof is given in
Appendix~\ref{sec:proof-no} (to simplify the notations, we consider
uniform step sizes $\Delta_0 =\dots=\Delta_{N-1} = \Delta$).

\begin{theorem}
  \label{theo:1}
  Consider a TOPP instance \emph{without zero-inertia points}. There
  exists a $\Delta_\mathrm{thr}$ such that if $\Delta <
  \Delta_\mathrm{thr}$, then the parameterization returned by TOPP-RA
  is optimal.
\end{theorem}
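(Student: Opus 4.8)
The plan is to prove that the sequence $(x_0^*,u_0^*,\dots,x_N^*)$ returned by TOPP-RA is, at every grid point, the \emph{pointwise largest} admissible squared-velocity profile with the prescribed endpoints: for any admissible parameterization $(\dot s_0^2=y_0,v_0,\dots,v_{N-1},y_N=\dot s_N^2)$ one has $y_i\le x_i^*$ for all $i$. Optimality of $(x_i^*)$ then follows immediately, since the traversal time of a discretized parameterization equals $\sum_{i=0}^{N-1}\tfrac{2\Delta}{\sqrt{x_i}+\sqrt{x_{i+1}}}$, a strictly decreasing function of each $x_i$ on the relevant domain of positive velocities; hence $(x_i^*)$ attains the minimum traversal time among admissible parameterizations of the discretized instance. (Combined with the consistency of the collocation scheme this also yields the convergence to the continuous optimum announced at the start of the section.)

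I would establish $y_i\le x_i^*$ by forward induction. The base case $y_0=x_0^*=\dot s_0^2$ is immediate. Along the forward pass two invariants hold: (i) $x_i^*\in\calK_i$ — true at $i=0$ because the algorithm did not abort, and preserved since the greedy rule selects $x_{i+1}^*\in\calK_{i+1}$; consequently $\calR_i(\{x_i^*\})\cap\calK_{i+1}\neq\emptyset$, and directly from the update rule together with $\calK_{i+1}\subseteq\calX_{i+1}$ one gets $x_{i+1}^*=\max\!\bigl(\calR_i(\{x_i^*\})\cap\calK_{i+1}\bigr)$; (ii) $y_{i+1}\in\calK_{i+1}$, witnessed by the tail $y_{i+1},\dots,y_N$. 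Write $\calR_i(\{x_i^*\})=[\lambda,\mu]$; since $x_{i+1}^*\in\calR_i(\{x_i^*\})$ we have $x_{i+1}^*\ge\lambda$. If $y_{i+1}<\lambda$ then trivially $y_{i+1}<x_{i+1}^*$. If $y_{i+1}\ge\lambda$, then to get $y_{i+1}\in\calR_i(\{x_i^*\})\cap\calK_{i+1}$, hence $y_{i+1}\le x_{i+1}^*$, it remains only to show $y_{i+1}\le\mu$. Now $y_{i+1}\in\calR_i(\{y_i\})$, so $y_{i+1}\le\max\calR_i(\{y_i\})$, and $y_i\le x_i^*$ by the induction hypothesis; the whole argument therefore reduces to the single monotonicity statement
\[
y_i\le x_i^*\ \Longrightarrow\ \max\calR_i(\{y_i\})\ \le\ \max\calR_i(\{x_i^*\}),
\]
i.e.\ that $x\mapsto\max\calR_i(\{x\})$ is non-decreasing on $\calX_i$.

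To prove this monotonicity I would use Assumption~\ref{assum:1} and the absence of zero-inertia points. Writing $\beta_i(x):=\max\calU_i(x)=\max\{u\mid u\tilde{\vec a}(s_i)+x\tilde{\vec b}(s_i)+\tilde{\vec c}(s_i)\le 0\}$, one has $\max\calR_i(\{x\})=\min\!\bigl(\max\calX_{i+1},\,x+2\Delta\,\beta_i(x)\bigr)$, so it suffices to show $x\mapsto x+2\Delta\,\beta_i(x)$ is non-decreasing. Each constraint $k$ with $\tilde{\vec a}(s_i)[k]>0$ imposes $u\le\bigl(-\tilde{\vec b}(s_i)[k]\,x-\tilde{\vec c}(s_i)[k]\bigr)/\tilde{\vec a}(s_i)[k]$, an affine bound of slope $-\tilde{\vec b}(s_i)[k]/\tilde{\vec a}(s_i)[k]$; constraints with $\tilde{\vec a}(s_i)[k]<0$ bound $u$ only from below and those with $\tilde{\vec a}(s_i)[k]=0$ are excluded by hypothesis. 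Hence $\beta_i$ is a minimum of affine functions and $x+2\Delta\,\beta_i(x)$ is piecewise affine with slopes of the form $1-2\Delta\,\tilde{\vec b}(s_i)[k]/\tilde{\vec a}(s_i)[k]$. Since $\tilde{\vec a},\tilde{\vec b}$ are piecewise-$\mathcal{C}^1$, hence continuous, on the compact interval $[0,1]$, and since in the absence of zero-inertia points $\varepsilon:=\min_{s,k}|\tilde{\vec a}(s)[k]|>0$ while $M:=\max_{s,k}|\tilde{\vec b}(s)[k]|<\infty$, every such slope is at least $1-2\Delta M/\varepsilon$. Taking $\Delta_{\mathrm{thr}}:=\varepsilon/(2M)$, for every $\Delta<\Delta_{\mathrm{thr}}$ all slopes are positive, $x+2\Delta\,\beta_i(x)$ is increasing, and the monotonicity follows, completing the induction. (The two rows appended to $\tilde{\vec a},\tilde{\vec b},\tilde{\vec c}$ in Assumption~\ref{assum:1} carry coefficients $\pm2\Delta\neq0$, so the zero-inertia notion is the same whether stated for $\tilde{\vec a}$ or $\bar{\vec a}$.)

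The main obstacle is exactly the last paragraph: the argument collapses precisely when some $\tilde{\vec a}(s)[k]$ vanishes. At a zero-inertia point the corresponding bound on $u$ has arbitrarily large slope, so the fields $\alpha_i,\beta_i$ — and hence $x\mapsto\max\calR_i(\{x\})$ — fail to be uniformly Lipschitz, and no fixed $\Delta_{\mathrm{thr}}$ restores monotonicity; geometrically, the greedy forward pass can then ``overshoot'' near such a point and be forced onto a strictly suboptimal branch of the maximum velocity curve. Handling that case requires a localized analysis in a neighborhood of each zero-inertia point and delivers only asymptotic (not exact) optimality, which is why Theorem~\ref{theo:1} is stated under the no-zero-inertia hypothesis.
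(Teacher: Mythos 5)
Your proof is correct and takes essentially the same route as the paper: a forward induction showing the greedy profile pointwise dominates every admissible parameterization, reduced to monotonicity of the maximal transition map, which holds for $\Delta < \Delta_\mathrm{thr}$ because each affine piece of $x + 2\Delta\,\beta_i(x)$ has slope $1 - 2\Delta\,\tilde{\vec b}(s_i)[k]/\tilde{\vec a}(s_i)[k] > 0$ once the $\tilde{\vec a}(s)[k]$ are bounded away from zero. The only cosmetic difference is that the paper folds the controllability condition $x + 2\Delta u \in \calK_{i+1}$ into the augmented constraints $\bar{\vec a},\bar{\vec b},\bar{\vec c}$ (Lemmas~\ref{lem:main} and~\ref{lem:1-basic}), whereas you keep $\beta_i$ defined by the raw constraints and handle the intersection with $\calK_{i+1}$ explicitly in the induction.
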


The key hypothesis of this theorem is that there is no zero-inertia
points.  In practice, however, zero-inertia points are unavoidable and
in fact constitute the most common type of switch
points~\cite{pham2014general}.  The next theorem, whose proof is given
in Appendix~\ref{sec:proof-with}, establishes that the sub-optimality
gap converges to zero with step size.

\begin{theorem}
  \label{theo:2}
  Consider a TOPP instance with a zero-inertia point at
  $s^\bullet$. Denote by $J^*$ the cost of the parameterization
  returned by TOPP-RA at step size $\Delta$:
  $\sum_{i=0}^{N+1}\frac{\Delta}{\sqrt{x_{i}^{*}}}$ and by $J^\dagger$ the
  minimum cost at the same step size. Then one has
  \[
  J^* - J^\dagger = O(\Delta).
  \]
\end{theorem}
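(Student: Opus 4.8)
The plan is to build on Theorem~\ref{theo:1} by localizing the source of sub-optimality to a shrinking neighborhood of the zero-inertia point $s^\bullet$. The key observation is that away from $s^\bullet$ the hypotheses of Theorem~\ref{theo:1} hold, so the only place where the greedy forward pass of TOPP-RA can fall short of the true optimal profile is near $s^\bullet$, where the optimal profile may have to ``dip below'' the maximum velocity curve to negotiate the zero-inertia constraint. Concretely, I would first recall that both $J^*$ and $J^\dagger$ are computed with the same discretized dynamics and constraints, so $J^\dagger \le J^*$ trivially, and it suffices to upper-bound $J^* - J^\dagger$. I would then split the grid into an ``outer'' region, consisting of grid points whose distance to $s^\bullet$ exceeds some threshold $\eta(\Delta)$ (to be chosen, e.g.\ $\eta = \Theta(\sqrt{\Delta})$ or $\Theta(\Delta^{2/3})$), and an ``inner'' region of grid points within $\eta$ of $s^\bullet$.

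In the outer region I would argue that the TOPP-RA profile $x_i^*$ coincides (up to $O(\Delta)$ per stage, summed to $O(\Delta)$ globally, exactly as in the proof of Theorem~\ref{theo:1}) with the optimal discretized profile, because the controllable sets $\calK_i$ restrict the states to a corridor around the true optimal velocity profile whose width shrinks with $\Delta$; here the absence of zero-inertia behavior means the relevant LP data $\bar{\vec a}(s_i)[k]$ are bounded away from zero, so the Lipschitz/perturbation estimates from Appendix~\ref{sec:proof-no} apply verbatim. In the inner region I would use two facts: (i) the number of inner grid points is $O(\eta/\Delta)$; and (ii) the velocities $\dot s_i^*$ there are bounded below by a positive constant uniformly in $\Delta$ — this is the crucial quantitative input and follows because the controllable set $\calK_{i^\bullet}$ at the zero-inertia stage contains the (fixed, $\Delta$-independent) optimal limiting velocity at $s^\bullet$, which is strictly positive by the no-stop assumption implicit in Assumption~\ref{assum:1}, and by continuity the same holds at neighboring stages. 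Combining (i) and (ii), the contribution of the inner region to $|J^* - J^\dagger| = \sum_i \Delta(1/\sqrt{x_i^*} - 1/\sqrt{x_i^\dagger})$ is at most $O(\eta/\Delta)\cdot\Delta\cdot C = O(\eta)$, and choosing $\eta = \Theta(\Delta)$ — or balancing against whatever outer error rate Appendix~\ref{sec:proof-no} delivers — yields the claimed $O(\Delta)$ bound.

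The subtle point, and the main obstacle, is establishing the uniform lower bound on $x_i^*$ across the inner region, and more generally controlling how close the greedy TOPP-RA profile can come to the boundary of the (possibly pinched) controllable set near $s^\bullet$. Near a zero-inertia point the controllable sets $\calK_i$ can narrow sharply, and one must rule out that TOPP-RA is forced onto a velocity that is $o(1)$ as $\Delta \to 0$, or that the forward greedy selection produces a profile on the ``wrong side'' of the optimal one by an amount worse than $O(\Delta)$ per stage in the inner region. I would handle this by examining the geometry of $\Omega_{i}\cap(\mathbb{R}\times\calK_i)$ near $s^\bullet$ using the piecewise-$\mathcal C^1$ structure from Assumption~\ref{assum:1}: a first-order Taylor expansion of $\bar{\vec a},\bar{\vec b},\bar{\vec c}$ around $s^\bullet$ shows the offending constraint behaves like $(s-s^\bullet)\,u + (\text{bounded in }x) \le 0$, so that within $|s-s^\bullet| \ge \eta$ the admissible control interval has length $\Omega(1)$ and within $|s-s^\bullet| < \eta$ the state is still confined to an interval of positive length bounded below independently of $\Delta$ (this is essentially the content of Appendix~\ref{sec:proof-with}). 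Once that lower bound is in hand, the telescoping estimate on $J^* - J^\dagger$ is routine.
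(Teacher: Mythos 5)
Your high-level architecture --- localize the sub-optimality to a shrinking neighborhood of $s^\bullet$, bound the inner contribution by (number of inner grid points) $\times\,\Delta\,\times O(1)$, and invoke Theorem~\ref{theo:1} outside --- is the same as the paper's. But the two quantitative steps that carry the whole proof are missing. First, you leave the half-width $\eta$ of the inner region as a free parameter and at the end simply assert that $\eta=\Theta(\Delta)$ works; nothing in your argument shows that the region where the greedy strategy can fail is that small. The paper's Lemma~\ref{lem:perturbation} proves exactly this: greedy selection is optimal wherever the maximal transition function $T_i^\beta$ is non-decreasing (Lemma~\ref{lem:main}), monotonicity fails only when $2\Delta\,\bar{\vec b}_i[k] > \bar{\vec a}_i[k]$ for the active constraint, and the Taylor expansion $\bar{\vec a}(s)[k]=A'(s-s^\bullet)+o(\cdot)$, $\bar{\vec b}(s)[k]=B+O(s-s^\bullet)$ turns that condition at stage $i^\bullet+r$ into $2\Delta B \gtrsim A' r\Delta$, i.e.\ $r\le l:=\lceil 2B/A'\rceil$, a constant \emph{independent of} $\Delta$. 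That is why the bad region has $O(1)$ grid points and physical width $O(\Delta)$. Your Taylor-expansion remark draws a different and not directly relevant conclusion (about the length of the admissible control interval rather than about monotonicity of $T^\beta$), and your appeal to ``essentially the content of Appendix~\ref{sec:proof-with}'' is circular, since that appendix is precisely the proof you are reconstructing.

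Second, you do not control how far apart $x^*$ and $x^\dagger$ can drift inside the inner region, nor what happens downstream of it. Your outer-region claim (``$O(\Delta)$ per stage, summed to $O(\Delta)$ globally'') is arithmetically inconsistent --- $O(\Delta)$ per stage over $O(1/\Delta)$ stages is $O(1)$ --- and Theorem~\ref{theo:1} gives exactness, not a per-stage error, only \emph{upstream} of $s^\bullet$ where both profiles start from the same state. Downstream, the two profiles exit the perturbation region at different states, and one needs (i) an invariant strip $[\mu_i,\kappa_i]$ of width $O(l\Delta)$ containing both profiles (the paper's Lemmas~\ref{lem:boxbound} and~\ref{lem:boxprop}, including a case analysis ruling out $x^\dagger$ escaping below the strip), and (ii) Lipschitz continuity of the optimal cost-to-go to convert the $O(\Delta)$ exit-state discrepancy into an $O(\Delta)$ cost discrepancy. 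Finally, the ``no-stop assumption implicit in Assumption~\ref{assum:1}'' that you invoke for the uniform lower bound on the velocities does not exist in the paper; the bound actually needed is $\max\calK_{i^\bullet+1}-C_\mu l\Delta>0$, which the paper obtains from the strip estimates rather than from any hypothesis on $\calK_{i^\bullet}$.
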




This theorem implies that, by reducing the step size, the cost of the
parameterization returned by TOPP-RA can be made arbitrarily close to
the minimum cost. This remains true when there are a finite number of
zero-inertia points. The case of zero-inertia
\emph{arcs}~\cite{shiller1992computation} might be more problematic,
but it is always possible to avoid such arcs during the planning
stage.

\subsection{Complexity analysis}
\label{sec:runn-time-analys}

We now perform a complexity analysis of TOPP-RA and compare it with
the Numerical Integration and the Convex Optimization approaches. For
simplicity, we shall restrict the discussion to the non-redundantly
actuated case (the redundantly-actuated case actually brings an
additional advantage to TOPP-RA, see Implementation
remark~\ref{rem:convex}).

Assume that there are $m$ constraint inequalities and that the path
discretization grid size is $N$.  As a large part of the computation
time is devoted to solving LPs, we need a good estimate of the
practical complexity of this operation. Consider a LP with $\nu$
optimization variables and $m$ inequality constraints. Different LP
methods (ellipsoidal, simplex, active sets, etc.) have different
complexities. For the purpose of this section, we consider the best
\emph{practical} complexity, which is realized by the simplex method,
in $O(\nu^2 m)$~\cite{Boyd:2004:CO:993483}.

\begin{itemize}
\item \emph{TOPP-RA:} The LPs considered here have 2 variables and
  $m+2$ inequalities. Since one needs to solve $3N$ such LPs, the
  complexity of TOPP-RA is $O(mN)$.

\item \emph{Numerical integration approach:} The dominant component of
  this approach, in terms of time complexity, is the computation of
  the Maximum Velocity Curve (MVC). In most TOPP-NI implementations to
  date, the MVC is computed, at each discretized path position, by
  solving $O(m^2)$ second-order
  polynomials~\cite{bobrow1985time,pfeiffer1987concept,
    slotine1989improving,shiller1992computation,pham2014general},
  which results in an overall complexity of $O(m^2N)$.

\item \emph{Convex optimization approach:} This approach formulates
  the TOPP problem as a single large convex optimization program with
  $O(N)$ variables and $O(mN)$ inequality constraints. In the fastest
  implementation we know of, the author solves the convex optimization
  problem by solving a sequence of linear programs (SLP) with the same
  number of variables and inequalities~\cite{hauser2014}. Thus, the
  time complexity of this approach is $O(KmN^3)$, where $K$ is the
  number of SLP iterations.
\end{itemize}

This analysis shows that TOPP-RA has the best theoretical
complexity. The next section experimentally assesses this observation.


\section{Experiments}
\label{sec:experiments}

We implements TOPP-RA in Python on a machine running Ubuntu with a
Intel i7-4770(8) 3.9GHz CPU and 8Gb RAM\@. To solve the LPs we use the
Python interface of the solver
\texttt{qpOASES}~\cite{Ferreau2014}. The implementation and test cases
are available at~\url{https://github.com/hungpham2511/toppra}.

\subsection{Experiment 1: Pure joint velocity and acceleration bounds}
\label{sec:time-optimal-path}

In this experiment, we compare TOPP-RA against TOPP-NI -- the fastest
known implementation of TOPP, which is based on the Numerical
Integration approach~\cite{pham2014general}. For simplicity, we
consider pure joint velocity and acceleration bounds, which involve
the same difficulty as any other types of kinodynamic constraints, as
far as TOPP is concerned.

\subsubsection{Effect of the number of constraint inequalities}

We considered random geometric paths with varying degrees of freedom
$n\in[2,60]$. Each path was generated as follows: we sampled $5$
random waypoints and interpolated smooth geometric paths using cubic
splines. For each path, velocity and acceleration bounds were also
randomly chosen such that the bounds contain zero. This ensures that
all generated instances are feasible. Each problem instance thus has
$m=2n+2$ constraint inequalities: $2n$ inequalities corresponding to
acceleration bounds (no pruning was applied, contrary
to~\cite{hauser2014}) and $2$ inequalities corresponding to velocity
bounds (the joint velocity bounds could be immediately pruned into one
lower and one upper bound on $\dot s$). According to the complexity
analysis of Section~\ref{sec:runn-time-analys}, we consider the number
of inequalities, rather than the degree of freedom, as independent
variable. Finally, the discretization grid size was chosen as $N=500$.

Fig.~\ref{fig:correct} shows the time-parameterizations and the
resulting trajectories produced by TOPP-RA and TOPP-NI on an instance with
$(n=6, m=14)$. One can observe that the two algorithms produced
virtually identical results, hinting at the correctness of TOPP-RA.

\begin{figure*}[ht]
  \centering
  \begin{tikzpicture}
    \node[anchor=south west,inner sep=0] (image) at (0,0)
    {\includegraphics[]{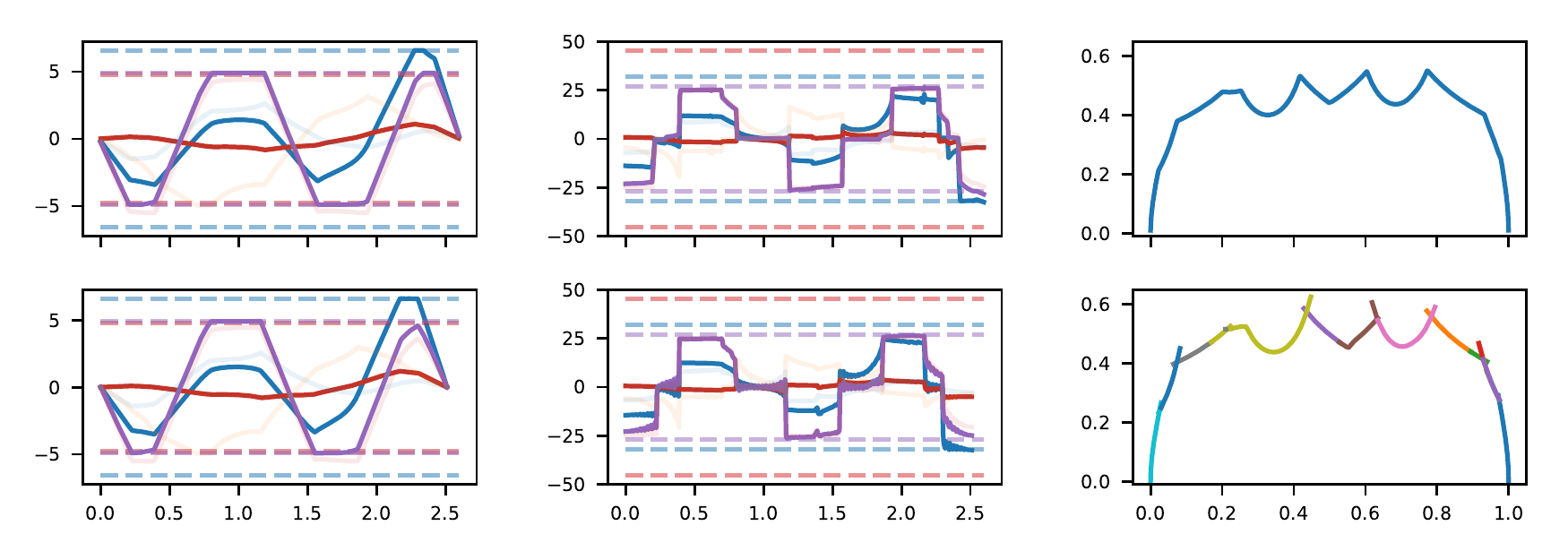}};
    \begin{scope}[x={(image.south east)},y={(image.north west)}]
      \node[anchor=south] at (0.18,  0.95) {\textbf{A}};
      \node[anchor=south] at (0.515, 0.95) {\textbf{B}};
      \node[anchor=south] at (0.85,  0.95) {\textbf{C}};
      \node[axes label, rotate=90, anchor=south] at (.030, 0.755)
      {Jnt.\ vel.\ $(\SI{}{rad s^{-1}})$};
      \node[axes label, rotate=90, anchor=south] at (.030, 0.32)
      {Jnt.\ vel.\ $(\SI{}{rad s^{-1}})$};
      \node[axes label, rotate=90, anchor=south] at (.36, 0.755)
      {Jnt.\ accel.\ $(\SI{}{rad s^{-2}})$};
      \node[axes label, rotate=90, anchor=south] at (.36, 0.32)
      {Jnt.\ accel.\ $(\SI{}{rad s^{-2}})$};
      \node[axes label, rotate=90] at (.67, 0.32)  {Path vel. $(\SI{}{s^{-1}})$};
      \node[axes label, rotate=90] at (.67, 0.755) {Path vel. $(\SI{}{s^{-1}})$};
      \node[axes label, anchor=north] at (.18, 0.05) {Time ($\SI{}{s}$)};
      \node[axes label, anchor=north] at (.52, 0.05) {Time ($\SI{}{s}$)};
      \node[axes label, anchor=north] at (.86, 0.05) {Path position};
      \node[rotate=90, anchor=north, draw] at (0.98, 0.755) {TOPP-RA};
      \node[rotate=90, anchor=north, draw] at (0.98, 0.32) {TOPP-NI};
    \end{scope}
  \end{tikzpicture}
  \caption{\label{fig:correct} Time-optimal parameterization of a
    6-dof path under velocity and acceleration bounds ($m=14$
    constraint inequalities and $N=500$ grid points).  TOPP-RA and
    TOPP-NI produce virtually identical results. \textbf{(A)}: joint
    velocities. \textbf{(B)}: joint accelerations. \textbf{(C)}:
    velocity profiles in the $(s,\dot s)$ plane. Note the small
    chattering in the joint accelerations produced by TOPP-NI, which
    is an artifact of the integration process. This chattering is
    absent from the TOPP-RA profiles.}
\end{figure*}

Fig.~\ref{fig:comp-solve} shows the computation time for TOPP-RA and
TOPP-NI, excluding the ``setup'' and ``extract trajectory'' steps
(which takes much longer in TOPP-NI than in TOPP-RA). The experimental
results confirm our theoretical analysis in that the complexity of
TOPP-RA is in linear in $m$ while that of TOPP-NI is quadratic in
$m$. In terms of actual computation time, TOPP-RA becomes faster than
TOPP-NI as soon as $m\geq 22$.  Table~\ref{tab:computation-time}
reports the different components of the computation time.



\begin{figure}[htp]
  \centering
  \begin{tikzpicture}
    \node[anchor=south west,inner sep=0] (image) at (0,0)
    {\includegraphics[]{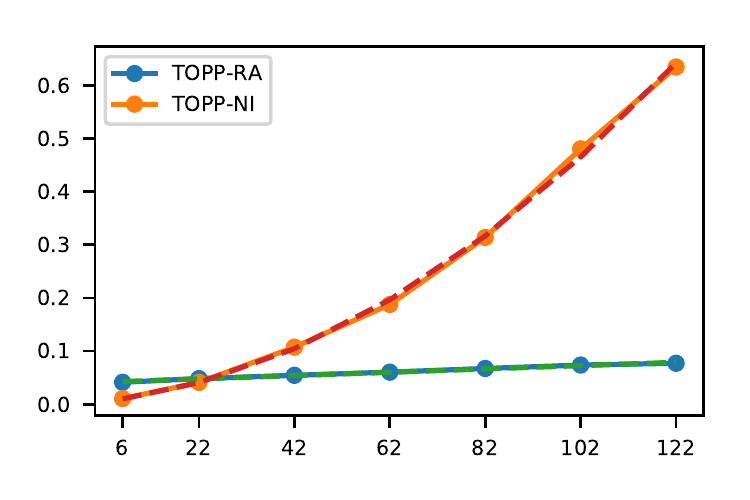}};
    \begin{scope}[x={(image.south east)},y={(image.north west)}]
      \node[axes label] at (0.53, 0) {No.\ of inequalities};
      \node[rotate=90, style=axes label, text width=4cm] at (-0.0, 0.53)
         {Avg.\ solve time \\ (per grid point) $(\SI{}{ms})$};
    \end{scope}
  \end{tikzpicture}
  \caption{\label{fig:comp-solve} Computation time of TOPP-RA (solid
    blue) and TOPP-NI (solid orange), excluding the ``setup'' and
    ``extract trajectory'' steps, as a function of the number of
    constraint inequalities. Confirming our theoretical complexity
    analysis, the complexity of TOPP-RA is linear in the number of
    constraint inequalities $m$ (linear fit in dashed green), while
    that of TOPP-NI is quadratic in $m$ (quadratic fit in dashed
    red). In terms of actual computation time, TOPP-RA becomes faster
    than TOPP-NI as soon as $m\geq 22$.}
\end{figure}


\begin{table}[htp]
  \caption{Breakdown of TOPP-RA and TOPP-NI total computation time to
    parameterize a path discretized with $N=500$ grid points, subject
    to $m=30$ inequalities.}
  \label{tab:computation-time}
  \centering
  \begin{tabular}{lllllrr}
    \toprule
    & & \multicolumn{5}{c}{Time ($\SI{}{ms}$)} \\
    \cmidrule{3-7}
    \multicolumn{2}{l}{}               &  \multicolumn{2}{l}{TOPP-RA} & \multicolumn{2}{l}{TOPP-RA-intp} & TOPP-NI	      \\
    \midrule
    \multicolumn{2}{l}{setup}          &  \multicolumn{2}{l}{1.0}     & \multicolumn{2}{l}{1.5}          & 123.6  \\
    \multicolumn{2}{l}{solve TOPP}     &  \multicolumn{2}{l}{26.1}    & \multicolumn{2}{l}{29.1}          & 28.3    \\
                         & backward pass&  & 16.5&   & 19.9&   \\
                         & forward pass &  & 9.6&   & 9.2&    \\
    \multicolumn{2}{l}{extract trajectory}& \multicolumn{2}{l}{2.7} & \multicolumn{2}{l}{2.7} & 303.4\\
    \cmidrule{3-7}
    \multicolumn{2}{l}{total} & \multicolumn{2}{l}{29.8} & \multicolumn{2}{l}{33.3} & 455.3\\
    \bottomrule
  \end{tabular}
\end{table}

Perhaps even more importantly than mere computation time, TOPP-RA was
extremely robust: it maintained $100\%$ success rate over all
instances, while TOPP-NI struggled with instances with many inequality
constraints ($m\geq 40$), see Fig.~\ref{fig:success-rate}. Since all
TOPP instances were feasible, an algorithm failed when it did not
return a correct parameterization.

\begin{figure}[htp]
  \centering
  \begin{tikzpicture}
    \node[anchor=south west,inner sep=0] (image) at (0,0)
    {\includegraphics[]{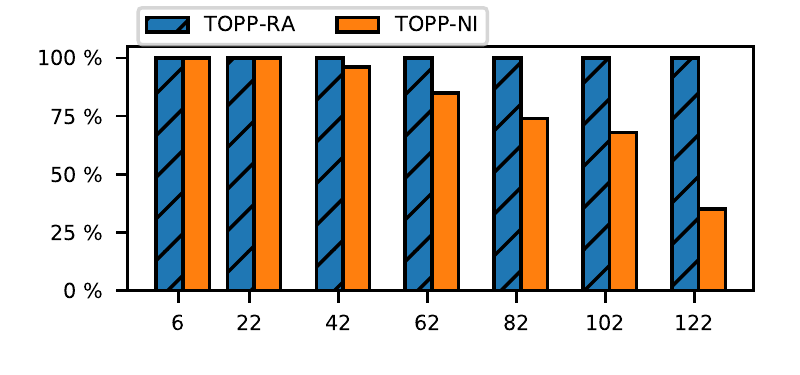}};
    \begin{scope}[x={(image.south east)},y={(image.north west)}]
      \node [axes label, rotate=90, anchor=south] at (0.05,0.55) {Success rate};
      \node [axes label] at (0.5, 0.03) {No.\ of inequalities};
    \end{scope}
  \end{tikzpicture}
  \caption{Success rate for TOPP-RA and TOPP-NI\@. TOPP-RA enjoys
    consistently $100\%$ success rate while TOPP-NI reports failure
    for more complex problem instances ($m\geq 40$).}
  \label{fig:success-rate}
\end{figure}


\subsubsection{Effect of discretization grid size}
\label{sec:experiment-3:-}

Grid size (or its inverse, discretization time step) is an important
parameter for both TOPP-RA and TOPP-NI as it affects running time,
success rate and solution quality, as measured by constraint
satisfaction error and sub-optimality. Here, we assess the effect of
grid size on \emph{success rate} and \emph{solution quality}. Remark
that, based on our complexity analysis in
Section~\ref{sec:runn-time-analys}, running time depends linearly on
grid size in both algorithms.

In addition to TOPP-RA and TOPP-NI, we considered TOPP-RA-intp. This
variant of TOPP-RA employs the first-order interpolation scheme (see
Appendix~\ref{sec:discr-scheme-error}) to discretize the constraints,
instead of the collocation scheme introduced in
Section~\ref{sec:discr-path-dynam}.


We considered different grid sizes $N\in[100, 1000]$. For each grid
size, we generated and solved $100$ random parameterization instances;
each instance consists of a random path with $n=14$ subject to random
kinematic constraints, as in the previous
experiment. Fig.~\ref{fig:grid_size}-\textbf{A} shows success rates
versus grid sizes. One can observe that TOPP-RA and TOPP-RA-intp
maintained $100\%$ success rate across all grid sizes, while TOPP-NI
reported two failures at $N=100$ and $N=1000$.

\begin{figure}[htp]
  \centering
  \begin{tikzpicture}
    \node[anchor=south west, inner sep=0] (bottom) at (0, 0)
    {\includegraphics[width=0.46\textwidth]{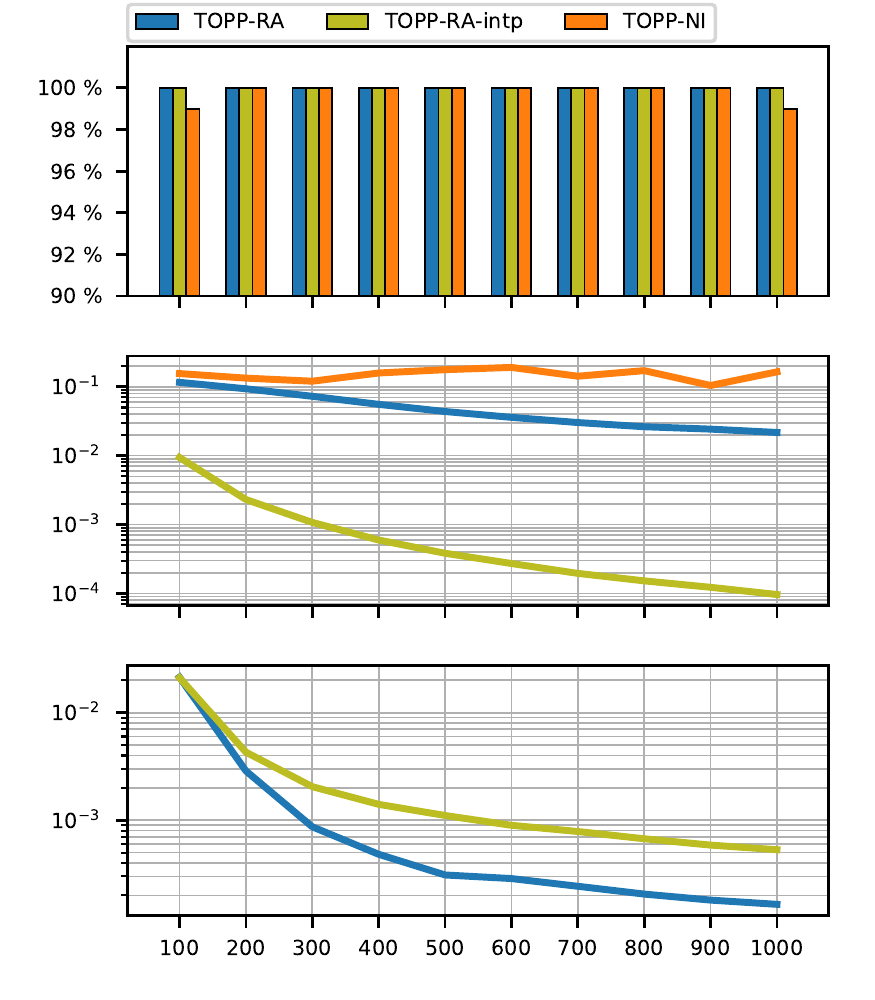}};
    \node[axes label, anchor=north] at (4.5, 0.4) {Grid size};
    \node[axes label, anchor=south, rotate=90] at (.5, 7.7) {Success rate (\%)};
    \node[axes label, anchor=south, rotate=90] at (.5, 4.8)
    {Relative constraint sat.\ error};
    \node[axes label, anchor=south, rotate=90] at (.5, 2.0)
    {$|J^{*} - J^{\dagger}|$};
    \node[anchor=south] at (7.8, 9.1) {\textbf{A}};
    \node[anchor=south] at (7.8, 6.15) {\textbf{B}};
    \node[anchor=south] at (7.8, 3.2) {\textbf{C}};
  \end{tikzpicture}
  \caption{\textbf{(A)}: effect of grid size on success
    rate. \textbf{(B)}: effect of grid size on relative constraint
    satisfaction error, defined as the ratio between the error and the
    respective bound. TOPP-RA-intp returned solutions that are orders
    of magnitude better than TOPP-RA and TOPP-NI. \textbf{(C)}: effect
    of grid size on difference between the average solution's cost and
    the optimal cost, which is approximated by solving TOPP-RA-intp
    with $N=10000$.  Solutions produced by TOPP-RA-intp had higher
    costs than those produced by TOPP-RA as those instances were more highly
    constrainted.  }
  \label{fig:grid_size}
\end{figure}


Next, to measure the effect of grid size on solution quality, we
looked at the \emph{relative greatest constraint satisfaction errors},
defined as the ratio between the errors, whose definition is given in
Appendix~\ref{sec:error-analysis}, and the respective bounds. For each
instance, we sampled the resulting trajectories at $\SI{1}{ms}$ and
computed the greatest constraint satisfaction errors by comparing the
sampled joint accelerations and velocities to their respective
bounds. Then, we averaged instances with the same grid size to obtain
the average error for each $N$.

Fig.~\ref{fig:grid_size}-\textbf{B} shows the average relative
greatest constraint satisfaction errors of the three algorithms with
respect to grid size. One can observe that TOPP-RA and TOPP-NI have
constraint satisfaction errors of the same order of magnitude for
$N < 500$, while TOPP-RA demonstrates better quality for $N \geq 500$.
TOPP-RA-intp produces solutions with much higher quality. This result
confirms our error analysis of different discretization schemes in
Appendix~\ref{sec:discr-scheme-error} and demonstrates that the
interpolation discretization scheme is better than the collocation
scheme whenever solution quality is concerned.

Fig.~\ref{fig:grid_size}-\textbf{C} shows the average difference
between the costs of solutions returned by TOPP-RA and TOPP-RA-intp
with the \emph{true} optimal cost, which was  approximated by
running TOPP-RA-intp with grid size $N=10000$. One can observe
that both algorithms are asymptotically optimal. Even more
importantly, the differences are relatively small: even at the coarse grid
size of $N=100$, the difference is only $10^{-2}\SI{}{sec}$.

\subsection{Experiment 2: Legged robot in multi-contact}
\label{sec:topp-subject-non}

Here we consider the time-parameterization problem for a 50-dof legged
robot in multi-contact under joint torque bounds and linearized
friction cone constraints.

\subsubsection{Formulation}

We now give a brief description of our formulation, for more details,
refer to~\cite{hauser2014,pham2015time}. Let $\vec w_{i}$ denote the
net contact wrench (force-torque pair) exerted on the robot by the
$i$-th contact at point $\vec p_{i}$. Using the linearized friction
cone, one obtains the set of feasible wrenches as a polyhedral cone
\begin{equation*}
 \{ \vec w_i \mid \vec F_i \vec w_i \leq 0\},
\end{equation*}
for some matrix $\vec F_{i}$. This matrix can be found using the Cone
Double Description method~\cite{fukuda1996double, Caron2017}.
Combining with the equation governing rigid-body dynamics, we obtain
the full dynamic feasibility constraint as follow
\begin{equation*}
  \begin{aligned}
    \vec M(\vec q)\ddot{\vec q}+&\dot{\vec q}^\top\vec C(\vec
    q)\dot{\vec q}+\vec g(\vec q)=\bftau + \sum_{i=1, 2} \vec
    J_{i}(\vec q)^{\top}\vec w_{i}, \\
    &\vec F_i \vec w_i \leq 0, \\
    &\bftau_{\min} \leq \bftau \leq \tau_{\max},\\
  \end{aligned}
\end{equation*}
where $\vec J_i(\vec q)$ is the wrench Jacobian.  The convex set
$\cC(\vec q)$ in Eq.~\eqref{eq:general-form} can now be identified as a
multi-dimensional polyhedron.

We considered a simple swaying motion: the robot stands with both feet
lie flat on two uneven steps and shift its body back and forth, see
Fig.~\ref{fig:humanoid}.  The coefficient of friction was set to
$\mu = 0.5$. Start and end path velocities were set to
zero. Discretization grid size was $N=100$. The number of constraint
inequalities was $m=242$.

\tikzstyle text box=[axes label, text width=1.3cm,
draw, rectangle, text=black, fill=white, anchor=north east]

\begin{figure*}[ht]
  \centering
  \begin{tikzpicture}
    \node[anchor=south west,inner sep=0] (figure)  at (10.5,0)
    {\includegraphics[]{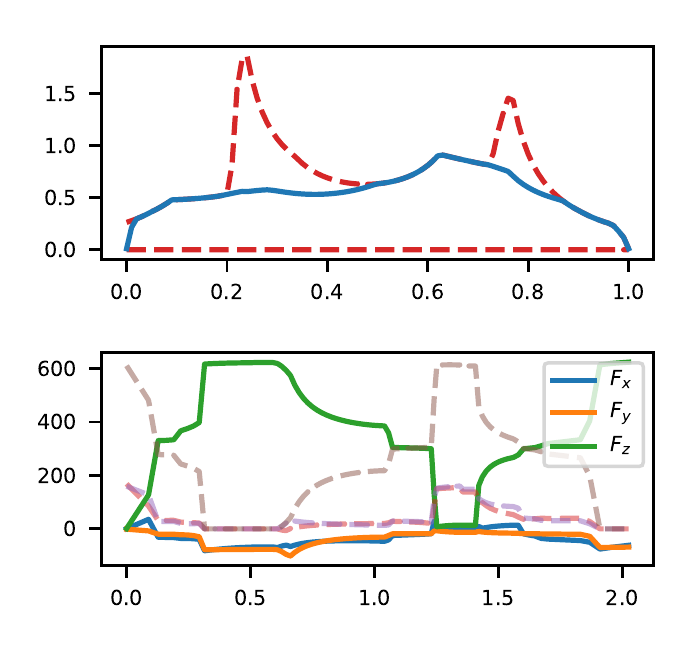}};
    \node[anchor=south west,inner sep=0] (image) at (0, 0.6){
      \includegraphics[width=0.55\textwidth]{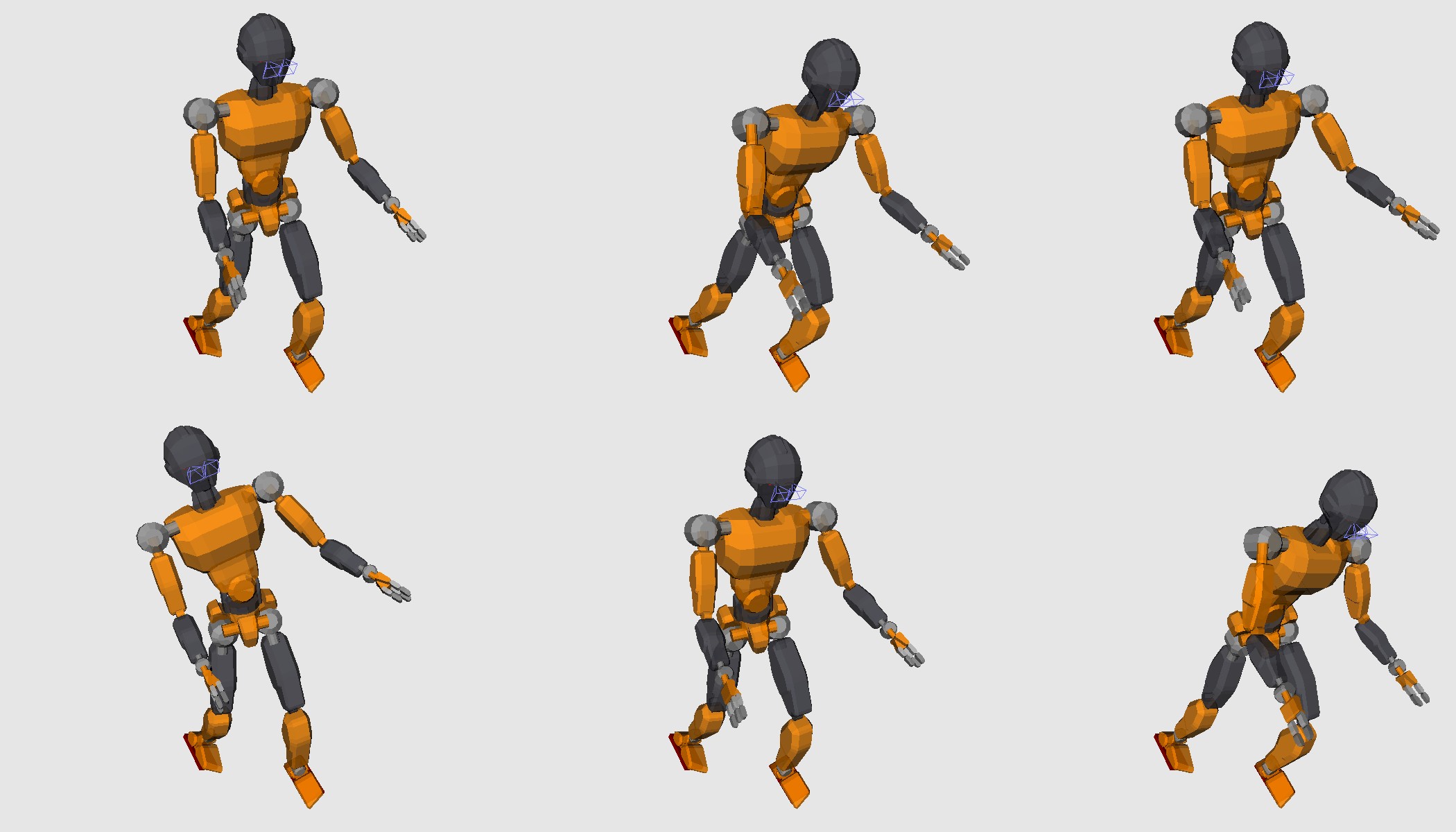}};
    \node at (5,    6.6) {\textbf{A}};
    \node at (14.2,   6.6) {\textbf{B}};
    \begin{scope}[shift={(figure.south west)},
                  x={(figure.south east)},
                  y={(figure.north west)}]
    \node[axes label, text width=2.5cm,
           rotate=90, anchor=south] at (0.05, 0.3)
         {Contact force  $(\SI{}{N})$};
    \node[axes label, rotate=90, anchor=south] at (0.05,  0.75)
         {Path vel. $(\SI{}{s^{-1}})$};
    \node[axes label, anchor=north] at (0.55,  0.08) {Time $(\SI{}{s})$};
    \node[axes label, anchor=north] at (0.55,  0.54) {Path position};
    \end{scope}
    \begin{scope}[shift={(image.south west)},
                  x={(image.south east)},
                  y={(image.north west)}]
      \draw[xstep=1/3, ystep=1/2, color=white, thick] (0, 0) grid (1, 1);

      \node[text box, anchor=north west, text width=0.7cm] at (0,   1)    {$\SI{0.  }{s}$};
      \node[text box, anchor=north west, text width=0.7cm] at (0.33,  1)  {$\SI{0.57}{s}$};
      \node[text box, anchor=north west, text width=0.7cm] at (0.66,  1)  {$\SI{0.82}{s}$};
      \node[text box, anchor=north west, text width=0.7cm] at (0,     0.51){$\SI{1.2 }{s}$};
      \node[text box, anchor=north west, text width=0.7cm] at (0.33,  0.51){$\SI{1.7 }{s}$};
      \node[text box, anchor=north west, text width=0.7cm] at (0.66,  0.51){$\SI{2.04}{s}$};

    \end{scope}
  \end{tikzpicture}
  \caption{Time-parameterization of a legged robot trajectory under
    joint torque bounds and multi-contact friction
    constraints. \textbf{(A)}: Snapshots of the retimed
    motion. \textbf{(B)}: Optimal velocity profile computed by TOPP-RA
    (blue) and upper and lower limits of the controllable sets (dashed
    red).  \textbf{(C)}: The optimal joint torques and contact forces
    are obtained ``for free'' as slack variables of the optimization
    programs solved in the forward pass. Net contact forces for the
    left foot are shown in colors and those for the right foot are
    shown in transparent lines.}
  \label{fig:humanoid}
\end{figure*}

\subsubsection{Results}

Excluding computation of dynamic quantities, TOPP-RA took
$\SI{267}{ms}$ to solve for the time-optimal path parameterization on
our computer.  The final parameterization is shown in
Fig.~\ref{fig:humanoid} and computation time is presented in
Table~\ref{tab:computation-time-poly}.

Compared to TOPP-NI and TOPP-CO, TOPP-RA had significantly better
computation time, chiefly because both existing methods require an
expensive polytopic projection step.  Indeed, \cite{hauser2014}
reported projection time of $\SI{2.4}{s}$ for a similar sized problem,
which is significantly more expensive than TOPP-RA computation
time. Notice that in~\cite{hauser2014}, computing the parameterization
takes an addition $\SI{2.46}{s}$ which leads to a total computation time
of $\SI{4.86}{s}$.

To make a more accurate comparison, we implement the following
pipeline on our computer to solve the same problem~\cite{pham2015time}
\begin{enumerate}
\item project the constraint polyhedron $\cC_i$ onto the path using
  Bretl's polygon recursive expansion algorithm~\cite{bretl2008tro};
\item parameterize the resulting problem using TOPP-NI.
\end{enumerate}
This pipeline turned out to be much slower than TOPP-RA\@.  We found
that the number of LPs the projection step solved is nearly 8 times
more than the number of LPs solved by TOPP-RA (which is fixed at
$3N = 300$).  For a more detailed comparison of computation time and
parameters of the LPs, refer to Table~\ref{tab:computation-time-poly}.

\subsubsection{Obtaining joint torques and contact forces ``for free''}
\label{sec:obta-joint-torq}

Another interesting feature of TOPP-RA is that the algorithm can
optimize and obtain joint torques and contact forces ``for free''
without additional processing. Concretely, since joint torques and
contact forces are slack variables, one can simply store the optimal
slack variable at each step and obtain a trajectory of feasible
forces. To optimize the forces, we can solve the following quadratic
program (QP) at the $i$-th step of the forward pass
\begin{equation*}
  \begin{aligned}
    \min & \quad - u + \epsilon \|(\vec w, \bftau)\|_2^2\\
    \mathrm{s.t.} & \quad x = x_i \\
         & \quad (u, x)  \in \Omega_i\\
         & \quad x + 2 \Delta_i u \in K_{i+1},
  \end{aligned}
\end{equation*}
where $\epsilon$ is a positive scalar.  Figure~\ref{fig:humanoid}'s
lower plot shows computed contact wrench for the left leg.  We note
that both existing approaches, TOPP-NI and TOPP-CO are not able to
produce joint torques and contact forces readily as they ``flatten''
the constraint polygon in the projection step.

In fact, the above formulation suggests that time-optimality is simply
a specific objective cost function (linear) of the more general family
of quadratic objectives. Therefore, one can in principle depart from
time-optimality in favor of more realistic objective such as
minimizing torque while maintaining a certain nominal velocity
$x_{\rm{norm}}$ as follow
\begin{equation*}
  \begin{aligned}
    \min & \quad \|x_i + 2 \Delta_i u - x_{\mathrm{norm}} \|_2^2 + \epsilon \|(\vec w, \bftau)\|_2^2 \\
    \mathrm{s.t.} & \quad x = x_i \\
         & \quad (u, x)  \in \Omega_i\\
         & \quad x + 2 \Delta_i u \in K_{i+1}.
  \end{aligned}
\end{equation*}

Finally, we observed that the choice of path discretization scheme has
noticeable effects on both computational cost and quality of the
result.  In general, TOPP-RA-intp produced smoother trajectories and
better (lower) constraint satisfaction error at the cost of longer
computation time. On the other hand, TOPP-RA was faster but produced
trajectories with jitters~\footnote{Our experiments show that
  singularities do not cause parameterization failures for TOPP-RA and
  the jitters can usually be removed easily. One possible method is to
  use cubic splines to smooth the velocity profile locally around the
  jitters.} near dynamic singularities~\cite{pham2014general} and had
worse (higher) constraint satisfaction error.

\begin{table}[ht]
  \caption{Computation time ($\SI{}{ms}$) and internal parameters
    comparison between TOPP-RA, TOPP-NI and TOPP-RA-intp (first-order
    interpolation) in Experiment 2.}
\label{tab:computation-time-poly}
\centering
\begin{tabular}{lrrr}
  \toprule
  & TOPP-RA         &  TOPP-RA-intp  & TOPP-NI            	      \\
  \midrule
  & \multicolumn{3}{c}{Time ($\SI{}{ms}$)}\\
  \cmidrule{2-4}
  comp.\ dynamic quantities  &181.6  & 193.6   &281.6  \\
  polytopic projection      & 0.0   &    0.0  &3671.8 \\
  solve TOPP                & 267.0 & 1619.0  & 335.0 \\
  extract trajectory        &  3.0  &     3.0 &  210.0 \\
  \cmidrule{2-4}
  total                     & 451.6 &  1815.6 & 4497.8 \\
  \midrule
  & \multicolumn{3}{c}{Parameters}\\
  \cmidrule{2-4}
  joint torques /        &yes   &  yes  & no  \\
  contact forces avail.  &      &       &     \\
  No.\ of LP(s) solved    & 300  & 300   & 2110\\
  No.\ of variables       &64    & 126   & 64  \\
  No.\ of constraints     & 242  &  476  & 242 \\
  Constraints sat.\ error & $O(\Delta)$ & $O(\Delta^2)$ & $O(\Delta)$  \\
  \bottomrule
\end{tabular}
\end{table}

\section{Additional benefits of TOPP by Reachability Analysis}
\label{sec:optimal-path-policy}

We now elaborate on the additional benefits provided by the reachability
analysis approach to TOPP.

\subsection{Admissible Velocity Propagation}
\label{sec:avp}

Admissible Velocity Propagation (AVP) is a recent concept for
kinodynamic motion planning~\cite{pham2017admissible}. Specifically,
given a path and an initial interval of velocities, AVP returns
exactly the interval of all the velocities the system can reach after
traversing the path while respecting the system kinodynamic
constraints. Combined with existing \emph{geometric} path planners,
such as RRT~\cite{kuffner2000rrt}, this can be advantageously used for
\emph{kinodynamic} motion planning: at each tree extension in the
configuration space, AVP can be used to guarantee the eventual
existence of admissible path parameterizations.

Suppose that the initial velocity interval is $\bbI_0$. It can be
immediately seen that, what is computed by AVP is exactly the
reachable set $\calL_N(\bbI_0)$
(cf. Section~\ref{sec:reachable-sets}). Furthermore, what is computed
by AVP-Backward~\cite{lertkultanon2014dynamic} given a desired final
velocity interval $\bbI_N$ is exactly the controllable set
$\calK_0(\bbI_N)$ (cf. Section~\ref{sec:controllable-sets}). In terms
of complexity, $\calR_N(\bbI_0)$ and $\calK_0(\bbI_N)$ can be found by
solving respectively $2N$ and $2N$ LPs. We have thus re-derived the
concepts of AVP at no cost.

\subsection{Robustness to parametric uncertainty}
\label{sec:robust-moti-plann}

In most works dedicated to TOPP, including the development of the
present paper up to this point, the parameters appearing in the
dynamics equations and in the constraints are supposed to be exactly
known. In reality, those parameters, which include inertia
matrices or payloads in robot manipulators, or feet positions or
friction coefficients in legged robots, are only known up to some
precision. An admissible parameterization for the nominal values of
the parameters might not be admissible for the actual values, and the
probability of constraints violation is even higher in the \emph{optimal}
parameterization, which saturates at least one constraint at any
moment in time.

TOPP-RA provides a natural way to handle parametric
uncertainty. Assume that the constraints appear in the following form
\begin{equation}
  \label{eq:unc-cnst}
  \begin{aligned}
    &\mathbf{a}_i u + \mathbf{b}_i x+ \mathbf{c}_i \in \cC_i,\\
    &\forall (\vec a_i, \vec b_i, \vec c_i, \cC_i) \in \calE_i,
  \end{aligned}
\end{equation}
where $\calE_i$ contains all the possible values that the parameters
might take at path position $s_i$.

\begin{remark}
  Consider for instance the manipulator with torque bounds of
  equation~\eqref{eq:manip}. Suppose that, at path position $i$, the
  inertia matrix is uncertain, i.e., that it might take any values
  $\vec M_i \in B(\vec M^\textrm{nominal}_i,\epsilon)$, where
  $B(\vec M^\textrm{nominal}_i,\epsilon)$ denotes the ball of radius
  $\epsilon$ centered around $\vec M^\textrm{nominal}_i$ for the max
  norm. Then, the first component of $\calE_i$ is given by
  $\{\vec M_i\vec q'(s_i) \mid \vec M_i \in B(\vec
  M^\textrm{nominal}_i,\epsilon)\}$,
  which is a convex set.

  In legged robots, uncertainties on feet positions or on friction
  coefficients can be encoded into a ``set of sets'', in which $\cC_i$
  can take values.  
\end{remark}

TOPP-RA can handle this situation by suitably modifying its two
passes. Before presenting the modifications, we first give some
definitions.  Denote the $i$-stage set of \emph{robust admissible}
control-state pairs by
\begin{equation*}
  \widehat{\Omega}_{i} := \{(u,x) \mid \textrm{Eq.~\eqref{eq:unc-cnst} holds}\}.
\end{equation*}
The sets of robust admissible states $\widehat{\calX}_i$ and robust
admissible controls $\widehat{\calU}_i(x)$ can be defined as in
Section~\ref{sec:useful-definitions}.

In the backward pass, TOPP-RA computes the \emph{robust controllable
  sets}, whose definition is given below. 


\begin{definition}[$i$-stage \emph{robust} controllable set]
  Consider a set of desired ending states $\bbI_N$. The
  \emph{$i$-stage robust controllable set}
  $\widehat{\mathcal{K}}_i(\bbI_N)$ is the set states
  $x\in\widehat{\calX_i}$ such that there exists a state
  $x_N\in \bbI_N$ and a sequence of \emph{robust admissible controls}
  $u_i,\dots, u_{N-1}$ that steers the system from $x$ to $x_{N}$.
\end{definition}

To compute the robust controllable sets, one needs the robust
one-step set.

\begin{definition}[\emph{Robust} one-step set]
  Consider a set of states $\bbI$. The \emph{robust one-step set}
  $\widehat{\calQ}_i(\bbI)$ is the set of states
  $x\in\widehat{\calX}_{i}$ such that there exists a state
  $\tilde{x}\in\bbI$ and a robust admissible control
  $u\in\widehat{\calU}_i(x)$ that steers the system from $x$ to
  $\tilde{x}$.
\end{definition}

Finally, in the forward pass, the algorithm selected the
\emph{greatest} robust admissible control at each stage.

\begin{remark}
  Computing the robust one-step set and the greatest robust admissible
  control involves solving LPs with uncertain constraints of the
  form~\eqref{eq:unc-cnst}. In general, these constraints may contain
  hundreds of inequalities, making them difficult to handle by generic
  methods. In the mathematical optimization literature, they are known
  as ``Robust Linear Programs'', and specific methods have been
  developed to handle them efficiently, when the robust constraints
  are~\cite{ben2001lectures}
  \begin{enumerate}
  \item polyhedra;
  \item ellipsoids;
  \item Conic Quadratic re-presentable (CQr) sets.
  \end{enumerate}
  The first case can be treated as normal LPs with appropriate slack
  variables, while the last two cases are explicit Conic Quadratic
  Program (CQP). For more information on this conversion, refer to the
  first and second chapters of~\cite{ben2001lectures}.
\end{remark}


\section{Conclusion}
\label{sec:discussion}

We have presented a new approach to solve the Time-Optimal Path
Parameterization (TOPP) problem based on Reachability Analysis
(TOPP-RA). The key insight is to compute, in a first pass, the sets of
controllable states, for which admissible controls allowing to reach
the goal are guaranteed to exist. Time-optimality can then be
obtained, in a second pass, by a simple greedy strategy. We have
shown, through theoretical analyses and extensive experiments, that
the proposed algorithm is extremely robust (100\% success rate), is
competitive in terms of computation time as compared to the fastest
known TOPP implementation~\cite{pham2014general} and produces
solutions with high quality. Finally, the new approach yields
additional benefits: no need for polytopic projection in the
redundantly-actuated case, Admissible Velocity Projection, and
robustness to parameter uncertainty.

A recognized disadvantage of the classical TOPP formulation is that
the time-optimal trajectory contains hard acceleration switches,
corresponding to infinite jerks. Solving TOPP subject to jerk bounds,
however, is not possible using the CO-based approach as the problem
becomes non-convex~\cite{verscheure2008practical}. Some prior works
proposed to either extend the NI-based
approach~\cite{tarkiainen1993time,HungPham2017} or to represent the
parameterization as a spline and optimize directly over the parameter
space~\cite{costantinescu2000smooth,oberherber2015successive}. Exploring
how Reachability Analysis can be extended to handle jerk bounds is
another direction of our future research.

Similar to the CO-based approach, Reachability Analysis can only be
applied to instances with convex
constraints~\cite{verscheure2008practical}. Yet in practice, it is
often desirable to consider in addition non-convex constraints, such
as joint torque bounds with viscous friction effect.  Extending
Reachability Analysis to handle non-convex constraints is another
important research question.

\subsection*{Acknowledgment}

This work was partially supported by grant ATMRI:2014-R6-PHAM (awarded
by NTU and the Civil Aviation Authority of Singapore) and by the
Medium-Sized Centre funding scheme (awarded by the National Research
Foundation, Prime Minister's Office, Singapore).

\appendix
\subsection{Relation between TOPP-RA and TOPP-NI}
\label{sec:relat-with-numer}

TOPP-RA and TOPP-NI are subtly related: they in fact compute the same
velocity profiles, but in different orders.  To facilitate this
discussion, let us first recall some terminologies from the classical
TOPP literature (refer to~\cite{pham2014general} for more details)
\begin{itemize}
\item Maximum Velocity Curve ($\MVC$): a mapping from a path position
  to the highest dynamically feasible velocity;
\item Integrate forward (or backward) following $\alpha$ (or $\beta$):
  for each tuple $s, \dot s$, $\alpha(s, \dot s)$ is the smallest
  control and $\beta(s, \dot s)$ is the greatest one; we integrate
  forward and backward by following the respective vector field
  ($\alpha$ or $\beta$);
\item $\alpha \rightarrow \beta$ switch point: there are three kinds
  of switch points: tangent, singular, discontinuous;
\item $\dot {s}_{\rm{beg}}, \dot s_{\rmend}$: starting and ending
  velocity at path positions $0$ and $ s_{\rmend}$ respectively.
\end{itemize}
Note that $\alpha, \beta$ functions recalled above are different from the
functions defined in Definition~\ref{def:alpha-beta}. The formers
maximize over the set of feasible states while the laters maximize
over the set of feasible \emph{and controllable states}.

TOPP-NI proceeds as follows
\begin{enumerate}
\item determine the  $\alpha\rightarrow\beta$ switch points;
\item from each $\alpha\rightarrow\beta$ switch point, integrate
  forward following $\beta$ and backward following $\alpha$ to obtain
  the Limiting Curves ($\LC$s);
\item take the lowest value of the $\LC$s at each position to form
  the Concatenated Limiting Curve ($\CLC$);
\item from $(0, \dot s_{\rm{beg}})$ integrate forward following
  $\beta$; from $(s_\rmend, \dot s_\rmend)$ integrate backward
  following $\alpha$ until their intersections with the $\CLC$; then
  return the combined $\beta-\CLC-\alpha$ profile.
\end{enumerate}

We now rearrange the above steps so as to compare with the two passes
of TOPP-RA, see Fig.~\ref{fig:CLC}
\begin{enumerate}
\item [] \emph{Backward pass}
\item [$1$)] determine the  $\alpha\rightarrow\beta$ switch points;
\item [${2a}$)] from each $\alpha\rightarrow\beta$ switch point, integrate
  backward following $\alpha$ to obtain the Backward Limiting Curves ($\BLC$s);
\item [${2b}$)] from the point $(s_{\rmend}, \dot s_{\rmend})$
  integrate backward following $\alpha$ to obtain the last BLC;
\item [$3$)] take the lowest value of the $\BLC$'s \emph{and the $\MVC$} at each
  position to form the upper boundary of the controllable sets.
\item [] \emph{Forward pass}
\item [$4a$)]set the current point to the point $(s_{\rm{beg}}, \dot s_{\rm{beg}})$;
\item [$4b$)] repeat until the current point is the point
  $(s_{\rmend}, \dot s_{\rmend})$, from the current point integrate
  forward following $\beta$ until hitting a BLC, set the corresponding
  switch point as the new current point.
\end{enumerate}

\begin{figure}[htb]
\centering
  \begin{tikzpicture}
    \node[anchor=south west,inner sep=0] (image) at (0,0)
    {\includegraphics[]{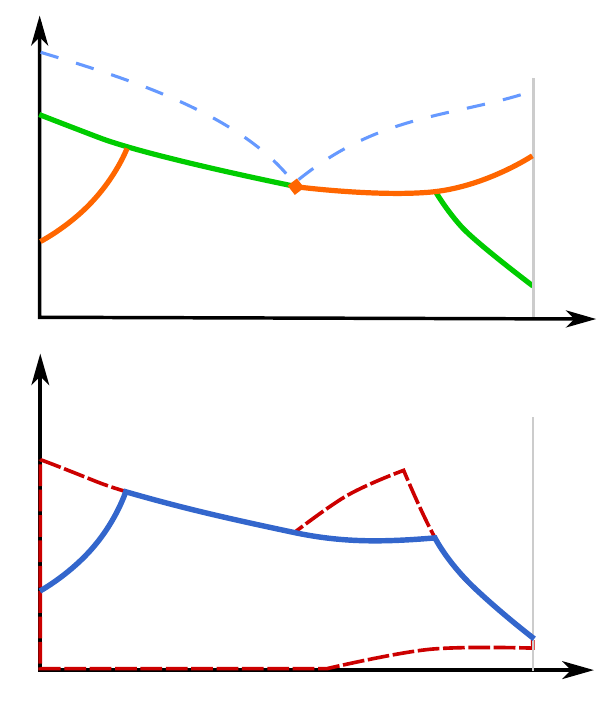}};
   \begin{scope}[x={(image.south east)},y={(image.north west)}]
     \node [axes label,rotate=-14, gray] at (0.3, 0.8) {$\alpha$-profile} ;
     \node [axes label,rotate=0, gray] at (0.67, 0.76) {$\beta$-profile} ;
    \node [rotate=13,axes label, anchor=east, text width=0, gray] at
    (0.7, 0.86) {$\MVC$};
    \node [axes label, anchor=west, text width=2cm] at
    (0.82, 0.60) {$\dot s_{\rmend}$};
    \draw [thick] (0.07, 0.66)  -- +(-0.02, 0.);
    \draw [thick] (0.875, 0.6)  -- +(0.02, 0.);
    \draw [thick] (0.07, 0.171)  -- +(-0.02, 0.)
               ++ (0.0, -0.01) -- + (-0.02, 0);
    \draw [thick] (0.875, 0.105) -- +(0.02, 0.)
               ++ (0.0, -0.01) -- + (0.02, 0);
    \node [axes label, anchor=west, text width=2cm] at
    (0.82,0.10) {$\{\dot s_{\rmend}^{2}\}$};
    \node [axes label, text width=2cm, anchor=east] at
    (0.13,0.67) {$\dot s_{\rm{beg}}$};
    \node [axes label, text width=2cm, anchor=east] at
    (0.13,0.16) {$\{\dot s_{\rm{beg}}^{2}\}$};
    \node [axes label, anchor=east, text width=0] at
    (0.03,0.77) {$\dot s$};
    \node [axes label, anchor=east, text width=0] at
    (0.03,0.29) {$\dot s^{2}$};
    \node [axes label, anchor=north] at (0.5, 0.05) {$s$};
    \node [axes label, anchor=north] at (0.5, 0.54) {$s$};
    \filldraw[fill=yellow] (0.48, 0.735) circle (3.5pt);
    \filldraw[fill=gray] (0.21, 0.79) circle (2.5pt);
    \filldraw[fill=gray] (0.72, 0.73) circle (2.5pt);
    \node[axes label, anchor=north, gray] (a) [text width=2cm] at
    (.48,0.7) {$\alpha\rightarrow\beta$ switch point};
    \node at (0.9, 0.93) {$\textbf{A}$};
    \node at (0.9, 0.43) {$\textbf{B}$};
    \node[axes label] at (0.2,  0.7) {(3)};
    \node[axes label] at (0.35, 0.73) {(1)};
    \node[axes label] at (0.65, 0.7) {(2)};
    \node[axes label] at (0.8,  0.6) {(4)};

    \filldraw[fill=blue!50] (0.21,   0.305) circle (2.5pt);
    \filldraw[fill=blue!50] (0.48, 0.249) circle  (2.5pt);
    \filldraw[fill=blue!50] (0.71,   0.239) circle (2.5pt);
    \node[axes label] at (0.19,  0.22) {(3)};
    \node[axes label] at (0.35, 0.24) {(2)};
    \node[axes label] at (0.63, 0.21) {(4)};
    \node[axes label] at (0.78,  0.14) {(1)};

    \end{scope}
  \end{tikzpicture}
  \caption{\label{fig:CLC} TOPP-NI \textbf{(A)} and TOPP-RA
    \textbf{(B)} compute the time-optimal path parameterization by
    creating similar profiles in different ordering. (1,2,3,4) are the
    orders in which profiles are computed. }
\end{figure}

The key idea in this rearrangement is to not compute $\beta$ profiles
immediately for each switch point, but delay until needed. The
resulting algorithm is almost identical to TOPP-RA except for the
following points
\begin{itemize}
\item since TOPP-RA does not require to explicitly compute the switch
  points (they are implicitly identified by computing the controllable
  sets) the algorithm avoids one of the major implementation
  difficulties of TOPP-NI;
\item TOPP-RA requires additional post-processing to remove the
  jitters. See the last paragraph of Section~\ref{sec:obta-joint-torq}
  for more details.
\end{itemize}


\subsection{Proof of optimality (no zero-inertia point)}
\label{sec:proof-no}

The optimality of TOPP-RA in this case relies on the properties of the
maximal transition functions.
\begin{definition}
  \label{def:alpha-beta}
  At a given stage $i$, the minimal and maximal controls at state $x$
  are defined by\,\footnote{These definitions differ from the
    common definitions of maximal and minimal controls\@. See Appendix A for more details.}
  \begin{equation*}
    \begin{aligned}
      \alpha_i(x) := \min\{u \mid \bar{\vec a}_i u + \bar {\vec b}_i x + \bar{\vec c}_i \leq 0 \}, \\
      \beta_i(x) := \max\{u \mid \bar{\vec a}_i u + \bar {\vec b}_i x + \bar{\vec c}_i \leq 0 \}.
    \end{aligned}
  \end{equation*}
  The \emph{minimal and maximal transition functions} are defined by
  \begin{equation*}
    T_i^\alpha(x) := x + 2 \Delta \alpha_i(x),\quad T_i^\beta(x) := x
    + 2 \Delta \beta_i(x). \qedhere
  \end{equation*}
\end{definition}

The key observation is: if the maximal transition function is
\emph{non-decreasing}, then the greedy strategy of TOPP-RA is optimal.
This is made precise by the following lemma.

\begin{lem}
\label{lem:main}
Assume that, for all $i$, the maximal transition function is
non-decreasing, i.e.
\[
  \forall x, x'' \in \mathcal{K}_i, \quad x \geq x' \implies
  T_i^\beta(x) \geq T^\beta_i(x').  
\]
Then TOPP-RA produces the optimal parameterization.
\end{lem}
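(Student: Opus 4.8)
The plan is to show that the greedy parameterization $(x_i^*)$ produced by TOPP-RA dominates, state by state, \emph{every} admissible parameterization $(x_i)$ ending at $\dot s_N^2$, i.e.\ that $x_i^* \ge x_i$ for all $i$. Since the traversal time is $\sum_i \Delta/\sqrt{x_i}$, a pointwise larger velocity profile has smaller cost, so dominance immediately yields optimality. The key structural fact to exploit is that, because $\beta_i$ maximizes $u$ over $\Omega_i \cap (\mathbb{R}\times\calK_i)$ \emph{with the controllability constraint $x + 2\Delta u \in \calK_{i+1}$ folded in} (via the $\bar{\vec a},\bar{\vec b},\bar{\vec c}$ of Assumption~\ref{assum:1}), the forward pass of TOPP-RA is exactly $x_{i+1}^* = T_i^\beta(x_i^*)$, and the greedily chosen next state automatically lands in $\calK_{i+1}$.

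First I would set up the induction: the base case is $x_0^* = \dot s_0^2 = x_0$, which holds since both parameterizations start at the prescribed initial velocity (and feasibility of the instance guarantees $\dot s_0^2 \in \calK_0$, so the forward pass does not abort). For the inductive step, assume $x_i^* \ge x_i$. Both $x_i^*$ and $x_i$ lie in $\calK_i$: the former by construction of the backward pass and the correctness theorem, the latter because $(x_i)$ is an admissible parameterization reaching $\dot s_N^2$, hence controllable by definition. Now the admissible parameterization uses some control $u_i$ with $(u_i, x_i) \in \Omega_i$ and $x_i + 2\Delta u_i = x_{i+1} \in \calK_{i+1}$, so $u_i$ is feasible for the optimization defining $\beta_i(x_i)$; therefore $x_{i+1} = x_i + 2\Delta u_i \le x_i + 2\Delta\beta_i(x_i) = T_i^\beta(x_i)$. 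Applying the monotonicity hypothesis of Lemma~\ref{lem:main} to $x_i^* \ge x_i$ (both in $\calK_i$) gives $T_i^\beta(x_i^*) \ge T_i^\beta(x_i) \ge x_{i+1}$. Finally, since the forward pass of TOPP-RA selects precisely $u_i^* = \beta_i(x_i^*)$ (the largest admissible control keeping the next state in $\calK_{i+1}$), we get $x_{i+1}^* = T_i^\beta(x_i^*) \ge x_{i+1}$, closing the induction.

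With pointwise dominance established, I would conclude: for any admissible parameterization $(x_i)$ we have $x_i^* \ge x_i > 0$ for every $i$ (positivity of the $x_i$ on the interior follows from the instance being parameterizable, and the endpoints $x_0, x_N$ are the given nonzero starting/ending velocities — or, if one of them is zero, the corresponding term is handled consistently since both profiles share that value), hence $J^* = \sum_i \Delta/\sqrt{x_i^*} \le \sum_i \Delta/\sqrt{x_i} = J$. Taking the infimum over all admissible parameterizations shows $J^*$ is the minimum cost, which is the claim.

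The step I expect to be the main obstacle is the clean identification ``forward pass $=$ iteration of $T_i^\beta$'', which requires Assumption~\ref{assum:1} to guarantee that the combined admissibility-and-controllability region $\Omega_i \cap (\mathbb{R}\times\calK_i)$ is still a polyhedron described by linear inequalities, so that $\beta_i$ is well-defined and the TOPP-RA LP ``$\max u$ s.t.\ $(u,x_i^*)\in\Omega_i$, $x_i^* + 2\Delta u \in \calK_{i+1}$'' literally computes $\beta_i(x_i^*)$. A secondary subtlety is making sure both candidate states at stage $i$ genuinely lie in $\calK_i$ so that the monotonicity hypothesis (quantified over $\calK_i$) actually applies; this is where the correctness theorem and the definition of controllable sets are invoked. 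Everything else is a routine two-line induction.
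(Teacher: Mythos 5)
Your proof is correct and follows essentially the same route as the paper's: an induction establishing pointwise dominance $x_i^* \ge x_i$ over every admissible parameterization, using the monotonicity hypothesis together with the fact that the admissible control $u_i$ is feasible for the LP defining $\beta_i(x_i)$, and then concluding optimality because the cost is decreasing in each $x_i$. Your identification of the forward pass with $x_{i+1}^* = T_i^\beta(x_i^*)$ is exactly the paper's step (the paper writes it as $\min\{T_i^\beta(x_i^*),\max\mathcal{K}_{i+1}\}$, which is equivalent since $\beta_i$ already folds the controllability constraint into $\bar{\vec a},\bar{\vec b},\bar{\vec c}$).
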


\begin{proof}
  Consider an arbitrary admissible parameterization
  $\dot s_0^2 = x_0, u_0, \dots, u_{N-1}, x_N = \dot s_N^2$. We show
  by induction that, for all $i=0,\dots,N$, $x^*_{i} \geq x_i$, where
  the sequence $(x_{i}^*)$ denotes the parameterization returned by TOPP-RA
  (Algorithm~1). 

  \textrm{Initialization}: We have $x_0= \dot s_0^2 = x_0^*$, so the
  assertion is true at $i=0$.

  \textrm{Induction}: Steps 8 and 9 of Algorithm~1 can in fact be
  rewritten as follows
  \begin{equation*}
    x_{i+1}^* := \min \{ T_i^\beta(x_i^*), \max ({\cal K}_{i+1})\}.
  \end{equation*}
  By the induction hypothesis, one has $x_i^*\geq x_i$. Since
  $x_i^*, x_i\in \calK_i$, one has
  \[
  T_i^\beta (x_i^*) \geq T_i^\beta (x_i)  \geq x_{i+1}.
  \]
  Thus,
  \[
  \min \{ T_i^\beta(x_i^*), \max ({\cal K}_{i+1})\} \geq \min \{ x_{i+1},
  \max ({\cal K}_{i+1})\}, \ \textrm{i.e.}
  \]
  \[
  x_{i+1}^* \geq x_{i+1}.
  \]
  We have shown that at every stage the parameterization
  $x_0^*, \dots, x_N^*$ has higher velocity than that of any
  admissible parameterization. Hence it is optimal.
\end{proof}

Unfortunately, the maximal transition function is not always
non-decreasing, as made clear by the following lemma.

\begin{lem}
  \label{lem:1-basic}
  Consider a stage $i$, there exists $x^\beta_i$ such that, for all
  $x, x'\in \mathcal{K}_{i}$ 
  \begin{equation*}
    \begin{aligned}
    x \leq x' \leq x^\beta_i &\implies T_i^\beta(x) \leq T_i^\beta(x'), \\
    x^\beta_i \leq x \leq x' &\implies T_i^\beta(x) \geq T_i^\beta(x').
    \end{aligned}
  \end{equation*}
  In other words, $T^\beta_i$ is non-decreasing below $x^\beta_i$ and
  is non-increasing above $x^\beta_i$.

  Similarly, there exists $x^\alpha_i$ such that for  all $x, x'\in
  \mathcal{K}_{i}$ 
  \begin{equation*}
    \begin{aligned}
    x^\alpha_i \leq x \leq x' &\implies T_i^\alpha(x) \leq T_i^\alpha(x'), \\
    x \leq x' \leq x^\alpha_i &\implies T_i^\alpha(x) \geq T_i^\alpha(x').
    \end{aligned}
  \end{equation*}
\end{lem}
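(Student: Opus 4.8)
The plan is to exploit the \emph{convexity} of the feasible polygon in the $(u,x)$-plane. By Assumption~\ref{assum:1}, the set of admissible and controllable pairs at stage $i$,
\[
S_i := \Omega_i\cap(\mathbb{R}\times\mathcal{K}_i)=\{(u,x)\mid u\bar{\vec a}(s_i)+x\bar{\vec b}(s_i)+\bar{\vec c}(s_i)\le 0\},
\]
is a closed convex polygon, and its projection onto the $x$-axis is exactly $\mathcal{K}_i$: any $(u,x)\in S_i$ satisfies $(u,x)\in\Omega_i$ and $x+2\Delta u\in\mathcal{K}_{i+1}$, hence $x\in\calQ_i(\mathcal{K}_{i+1})=\mathcal{K}_i$, and conversely every $x\in\mathcal{K}_i$ admits such a $u$. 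Since $S_i$ is convex, $\mathcal{K}_i$ is an interval, and for each $x\in\mathcal{K}_i$ the fibre $\{u\mid(u,x)\in S_i\}$ is the nonempty interval $[\alpha_i(x),\beta_i(x)]$ of Definition~\ref{def:alpha-beta}.

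First I would show that $\beta_i$ is \emph{concave} and $\alpha_i$ is \emph{convex} on $\mathcal{K}_i$. Fix $x,x'\in\mathcal{K}_i$ and $\lambda\in[0,1]$. The points $(\beta_i(x),x)$ and $(\beta_i(x'),x')$ lie in $S_i$, so by convexity of $S_i$ the convex combination $\bigl(\lambda\beta_i(x)+(1-\lambda)\beta_i(x'),\ \lambda x+(1-\lambda)x'\bigr)$ also lies in $S_i$; by maximality of $\beta_i$ at the abscissa $\lambda x+(1-\lambda)x'$ this yields $\beta_i(\lambda x+(1-\lambda)x')\ge\lambda\beta_i(x)+(1-\lambda)\beta_i(x')$, i.e.\ concavity. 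The same argument with reversed inequality (and $\alpha_i$ the \emph{minimal} control) gives convexity of $\alpha_i$.

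Next, $T_i^\beta(x)=x+2\Delta\beta_i(x)$ is the sum of an affine function and a concave function, hence concave on the interval $\mathcal{K}_i$; dually $T_i^\alpha(x)=x+2\Delta\alpha_i(x)$ is convex. The lemma then follows from the elementary fact that a concave function on an interval is \emph{unimodal}: let $x_i^\beta$ be a point of $\mathcal{K}_i$ at which $T_i^\beta$ attains its maximum. For $x\le x'\le x_i^\beta$ in $\mathcal{K}_i$, write $x'=\lambda x+(1-\lambda)x_i^\beta$ with $\lambda\in[0,1]$; concavity plus maximality give $T_i^\beta(x')\ge\lambda T_i^\beta(x)+(1-\lambda)T_i^\beta(x_i^\beta)\ge\lambda T_i^\beta(x)+(1-\lambda)T_i^\beta(x')$, whence $T_i^\beta(x')\ge T_i^\beta(x)$; the symmetric computation on the other side of $x_i^\beta$ shows $T_i^\beta$ is non-increasing there, which is the first pair of implications. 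Taking instead $x_i^\alpha$ to be a minimizer of the convex function $T_i^\alpha$ on $\mathcal{K}_i$ and repeating the argument with the inequalities reversed yields the second pair.

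I expect the only real obstacle to be the boundary/degenerate bookkeeping: one must check that $\beta_i$ (resp.\ $\alpha_i$) is finite on all of $\mathcal{K}_i$ and that the extremum defining $x_i^\beta$ (resp.\ $x_i^\alpha$) is attained. Both are immediate when $\mathcal{K}_i$ is a bounded closed interval and $S_i$ is bounded in the $u$-direction over it — the situation in all cases of interest, e.g.\ the torque-bounded manipulator of Assumption~\ref{assum:1}, where the controllability constraint $x+2\Delta u\in\mathcal{K}_{i+1}$ together with the torque bounds confine $u$ to a bounded range. In pathological unbounded cases one simply takes $x_i^\beta$ to be the relevant endpoint of $\mathcal{K}_i$ (or $\pm\infty$), so that one of the two implications becomes vacuous and the other is global monotonicity.
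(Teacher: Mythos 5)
Your proof is correct, but it takes a genuinely different route from the paper's. You derive monotonicity structurally: the feasible polygon $S_i=\Omega_i\cap(\mathbb{R}\times\mathcal{K}_i)$ is convex, so the parametric-LP value functions $\beta_i$ and $\alpha_i$ are respectively concave and convex on $\mathcal{K}_i$, hence $T_i^\beta$ is concave and $T_i^\alpha$ convex, and unimodality of concave/convex functions on an interval gives the two monotone pieces, with $x_i^\beta$ (resp.\ $x_i^\alpha$) taken as a maximizer (resp.\ minimizer). The paper instead argues hands-on along the polygon's boundary: it tracks the active constraint $k$ at $(x,\beta_i(x))$ as $x$ increases, observes that the constraint normals rotate monotonically, and defines $x_i^\beta$ as the threshold at which the active constraint starts to satisfy $\bar{\vec b}_i[k]/\bar{\vec a}_i[k] > 1/(2\Delta)$; it then computes $T_i^\beta$ in closed form on each linear piece and reads off the sign of its slope. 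Your argument is shorter, avoids the edge-enumeration and angle bookkeeping, and works verbatim when $\cC_i$ is convex but not polytopic; your handling of attainment is also fine, since for a polyhedral $S_i$ the function $\beta_i$ is piecewise-linear, concave and continuous on the closed interval $\mathcal{K}_i$, and your identification of the projection of $S_i$ with $\mathcal{K}_i$ is correct given the augmented constraints of Assumption~\ref{assum:1}. What the paper's more explicit construction buys, and what your existence-only definition of $x_i^\beta$ does not immediately provide, is the quantitative, $\Delta$-dependent characterization of $x_i^\beta$ via the slope condition (Eq.~\eqref{eq:mono-cond}): that characterization is reused directly in the proofs of Theorem~\ref{theo:1} (choosing $\Delta_\mathrm{thr}$ so that $x_i^\beta=\max\mathcal{K}_i$ at every stage) and of Lemma~\ref{lem:perturbation} (locating the perturbation strip around a zero-inertia point), so if one adopted your proof one would have to re-derive that criterion separately before those later arguments go through.
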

\begin{figure}[htp]
  \centering
  \begin{tikzpicture}
    \node[anchor=south west,inner sep=0] (image) at (0,0)
    {\includegraphics[]{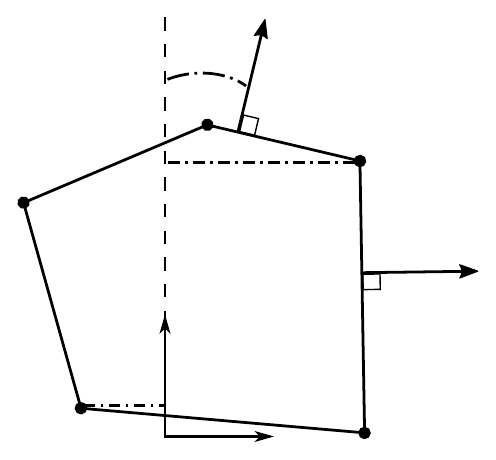}};
    \begin{scope}[x={(image.south east)},y={(image.north west)}]
      \node [anchor=west, text width=0] at (0.4, 0.9) {$\gamma$};
      \node [anchor=west, text width=0] at (0.53, 0.8) {$\vec v_{1}$};
      \node [anchor=west, text width=0] at (0.81, 0.45) {$\vec v_{2}$};
      \node [anchor=west, text width=0] at (0.33, 0.58) {$x^{\beta}_{i}$};
      \node [anchor=west, text width=0] at (0.22, 0.18) {$x^{\alpha}_{i}$};
      \node [axes label, anchor=west, text width=0] at (0.5, 0.0) {$u$};
      \node [axes label, anchor=west, text width=0] at (0.35, 0.3) {$x$};
    \end{scope}
  \end{tikzpicture}
  \caption{\label{fig:lemma1} At any stage, the polygon of
    controllable states and controls
    $\Omega_{i}\cap (\mathbb{R}\times \mathcal{K}_{i})$ contains
    $x^{\beta}_{i}$: the highest state under which the transition
    function $T^{\beta}_{i}$ is non-decreasing and $x^{\alpha}_{i}$:
    the lowest state above which the transition function $T^{\alpha}_{i}$
    is non-decreasing. }
\end{figure}

\begin{proof}
  Consider a state $x$. In the $(u,x)$ plane, draw a horizontal line
  at height $x$. This line intersects the polygon
  $\Omega_{i}\cap (\mathbb{R}\times \mathcal{K}_{i})$ at the minimal
  and maximal controls. See Fig~\ref{fig:lemma1}.

   
  Consider now the polygon $\Omega_{i}\cap (\mathbb{R}\times
  \mathcal{K}_{i})$.  Suppose that we enumerate the edges
  counter-clockwise (ccw), then the normals of the enumerated edges
  also rotate ccw. For example in Fig.~\ref{fig:lemma1}, the normal
  $v_1$ of edge 1 can be obtained by rotating ccw the normal $v_2$ of
  edge 2.

  Let $\gamma$ denote the angle between the vertical axis and the
  normal vector of the active constraint $k$ at $(x, \beta(x))$. One
  has $\cot{\gamma} = \bar{\vec b}_i[k] / \bar{\vec a}_i[k]$.  

  As $x$ increases, $\gamma$ decreases in the interval $(\pi, 0)$. Let
  $x^\beta_i$ be the lowest $x$ such that, for all $x > x^\beta_i$,
  $\gamma < \cot^{-1}{(1 / (2 \Delta))}$ ($x^\beta_i:=\max{\cal K}_i$ if
  there is no such $x$). 

  Consider now a $x > x^\beta_i$, one has, by construction
  \begin{equation}
    \label{eq:mono-cond}
    \frac{\bar{\vec b}_i[k]}{\bar{\vec a}_i[k]} >
    \frac{1}{2\Delta},
  \end{equation}
  where $k$ is the active constraint at $(x,\beta_i(x))$.
  The maximal transition function can be written as
  \begin{equation}
    \label{eq:2}
    \begin{aligned}
    T_i^\beta(x) &= x + 2 \Delta \beta_i(x)\\
    &= x + 2 \Delta \frac{- \bar{\vec c}_i[k] - \bar{\vec b}_i[k] x}{\bar{\vec a}_i[k]}\\
    &= x\left(1 - 2 \Delta \frac{\bar{\vec b}_i[k]}{\bar{\vec a}_i[k]}\right)
    - 2 \Delta \frac{\bar{\vec c}_i[k]}{\bar{\vec a}_i[k]}.
    \end{aligned}
  \end{equation}
  Since the coefficient of $x$ is negative, $T_i^\beta(x)$ is
  non-increasing. Similarly, for $x \leq x^\beta_i$, $T_i^\beta(x)$
  is non-decreasing. 
\end{proof}




We are now ready to prove Theorem~\ref{theo:1}.

\begin{proof}[Proof of Theorem~\ref{theo:1}]
  As there is no zero-inertia point, by uniform continuity, the
  $\bar{\vec a}(s)[k]$ are bounded away from 0. We can thus chose a
 step size $\Delta_\mathrm{thr}$ such that
  \begin{equation*}
    \frac{1}{2\Delta_\mathrm{thr}} > \max_{s, k} \left\{
        \frac{\bar{\vec b}(s)[k]}{\bar{\vec a}(s)[k]} \mid \bar{\vec
          a}(s)[k] >0   \right\}.
  \end{equation*}
  For any step size $\Delta < \Delta_\mathrm{thr}$, there is by
  construction no constraint that can have an angle
  $\gamma < \cot^{-1}{(1 / (2 \Delta))}$. Thus, for all stages $i$,
  $x^\beta_i=\max{\cal K}_i$, or in other words, $T_i^\beta(x)$ is
  non-decreasing in the whole set $\calK_i$. By Lemma~\ref{lem:main},
  TOPP-RA returns the optimal parameterization.
\end{proof}

\subsection{Proof of asymptotic optimality (with zero-inertia point)}
\label{sec:proof-with}

In the presence of a zero-inertia point, one cannot bound the
$\bar{\vec a}(s)[k]$ away from zero. Therefore, for any step size
$\Delta$, there is an interval around the zero-inertia point where the
maximal transition function is \emph{not} monotonic over the whole
controllable set $\calK_i$. Our proof strategy is to show that the
sub-optimality gap caused by that ``perturbation'' interval decreases
to $0$ with $\Delta$.

We first identify the ``perturbation'' interval. For simplicity,
assume that the zero-inertia point $s^\bullet$ is exactly at the
$i^\bullet$ grid point.

\begin{lem}
  \label{lem:perturbation}
  There exists an integer $l$ such that, for small enough $\Delta$,
  the maximal transition function is non-decreasing at all stages
  except in $[i^\bullet+1,\dots,i^\bullet+l]$.
\end{lem}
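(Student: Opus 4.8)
The plan is to pin down exactly which stages can fail to have a non-decreasing maximal transition function, and then to show that these stages form a window of constant width adjacent to $i^\bullet$. From the proof of Lemma~\ref{lem:1-basic}, $T_i^\beta$ fails to be non-decreasing on $\calK_i$ only when some constraint $k$ active at $(x,\beta_i(x))$ has $\bar{\vec a}_i[k]>0$ and $\bar{\vec b}_i[k]/\bar{\vec a}_i[k]>1/(2\Delta)$; hence it suffices that \emph{no} constraint $k$ satisfies $\bar{\vec a}_i[k]>0$ and $\bar{\vec b}_i[k]>\bar{\vec a}_i[k]/(2\Delta)$ simultaneously. I would first discard the two augmented (controllability) inequalities: the one expressing $x+2\Delta u\ge\min\calK_{i+1}$ has $\bar{\vec a}_i[k]=-2\Delta<0$ and is never active at $\beta_i$, while the one expressing $x+2\Delta u\le\max\calK_{i+1}$ has $\bar{\vec b}_i[k]/\bar{\vec a}_i[k]=1/(2\Delta)$ exactly, so by Eq.~\eqref{eq:2} it renders $T_i^\beta$ \emph{constant} when active (still non-decreasing). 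Therefore only the original constraints count, and a stage $i$ can be ``bad'' only if some original constraint $k$ obeys $0<\tilde{\vec a}(s_i)[k]<2\Delta\,\tilde{\vec b}(s_i)[k]$, which in particular forces $\tilde{\vec b}(s_i)[k]>0$.

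Next I would localize the bad stages. Since $s^\bullet$ is the unique zero-inertia point and there are no zero-inertia \emph{arcs}, for each $k$ the map $s\mapsto\tilde{\vec a}(s)[k]$ either never vanishes or vanishes only at $s^\bullet$; by continuity on the compact path and finiteness of $m$ there are a fixed $\delta>0$ and $c>0$ such that, whenever $\tilde{\vec a}(s)[k]>0$, either $\tilde{\vec a}(s)[k]\ge c$, or $|s-s^\bullet|<\delta$ and $k$ is one of the finitely many constraints $k^\bullet$ with $\tilde{\vec a}(s^\bullet)[k^\bullet]=0$. With $M:=\max_{s,k}\tilde{\vec b}(s)[k]$ (trivial if $M\le0$), any bad stage at $\Delta<c/(2M)$ must then have $|s_i-s^\bullet|<\delta$ and be caused by some $k^\bullet$. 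For each such $k^\bullet$ with $\tilde{\vec b}(s^\bullet)[k^\bullet]>0$ (the case $\tilde{\vec b}(s^\bullet)[k^\bullet]\le0$ produces no bad stage near $s^\bullet$ for small $\Delta$, see below), the piece-wise $\mathcal{C}^1$ regularity of Assumption~\ref{assum:1} lets me shrink $\delta$ so that on $[s^\bullet-\delta,s^\bullet+\delta]$ one has $\tilde{\vec b}(s)[k^\bullet]\le 2\,\tilde{\vec b}(s^\bullet)[k^\bullet]$ and, on the side where $\tilde{\vec a}(\cdot)[k^\bullet]>0$, a lower bound $\tilde{\vec a}(s)[k^\bullet]\ge C\,|s-s^\bullet|$ with a fixed $C>0$ (one may take $C=\tfrac{1}{2}|\tilde{\vec a}'(s^\bullet)[k^\bullet]|$ at a simple zero). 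Then badness forces $C\,|s_i-s^\bullet|<4\Delta\,\tilde{\vec b}(s^\bullet)[k^\bullet]$, hence, with $s_i-s^\bullet=(i-i^\bullet)\Delta$, $|i-i^\bullet|<4\,\tilde{\vec b}(s^\bullet)[k^\bullet]/C$, a bound \emph{independent of} $\Delta$. As $\tilde{\vec a}(\cdot)[k^\bullet]>0$ on a single side of $s^\bullet$, all bad stages from $k^\bullet$ lie on one side; taking that side to be $i>i^\bullet$ (up to the mirror situation, where the window sits just below $i^\bullet$ and the remaining argument is symmetric) and letting $l$ be one plus the largest of these finitely many bounds, all bad stages lie in $[i^\bullet+1,\dots,i^\bullet+l]$.

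The delicate step is the near-$s^\bullet$ estimate together with non-generic zero-inertia points. The clean derivation above uses a \emph{simple} zero of $\tilde{\vec a}(\cdot)[k^\bullet]$ and $\tilde{\vec b}(s^\bullet)[k^\bullet]>0$; I would still have to argue by hand that (i) if $\tilde{\vec b}(s^\bullet)[k^\bullet]\le0$ then $\tilde{\vec b}(s_i)[k^\bullet]$ stays non-positive near $s^\bullet$, or at worst $\tilde{\vec b}[k^\bullet]/\tilde{\vec a}[k^\bullet]$ stays bounded there, so no badness arises for $\Delta$ small; and (ii) a higher-order or touching zero of $\tilde{\vec a}(\cdot)[k^\bullet]$ (e.g.\ a double zero) would enlarge the window to width $\Theta(\Delta^{-1/2})$ or more, so the statement implicitly rests on zeros being generically simple and, crucially, on the exclusion of zero-inertia arcs. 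A minor point: $\bar{\vec a},\bar{\vec b},\bar{\vec c}$ involve the algorithmically produced $\calK_{i+1}$, but only the \emph{form} of the two augmented rows (their $\bar{\vec a}_i[k],\bar{\vec b}_i[k]$ entries), not the numbers $\min\calK_{i+1},\max\calK_{i+1}$, was used.
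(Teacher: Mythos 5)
Your proposal is correct and follows essentially the same route as the paper's proof: you use the monotonicity criterion of Eq.~\eqref{eq:mono-cond}, a linear (simple-zero) lower bound on $\bar{\vec a}(\cdot)[k]$ near $s^\bullet$ together with a bound on $\bar{\vec b}$, to get a window of $\Delta$-independent width, and you rule out all other constraints by the same bounded-away-from-zero argument used to construct $\Delta_{\mathrm{thr}}$ in Theorem~\ref{theo:1}. Your additional checks (the two augmented controllability rows, the $\tilde{\vec b}\leq 0$ case, and the caveat about non-simple zeros) are refinements of points the paper leaves implicit, not a different argument.
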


\begin{proof}
  Consider the Taylor expansion around $s^\bullet$ of the
  constraint that triggers the zero-inertia point
  \begin{equation*}
    \begin{aligned}
      \bar{\vec a}(s)[k] &= A'(s - s^\bullet) + o(s - s^\bullet), \\
      \bar{\vec b}(s)[k] &= B + B'(s - s^\bullet) + o(s - s^\bullet).
    \end{aligned}
  \end{equation*}

  Without loss of generality, suppose $A' > 0$.
  Eq.~\eqref{eq:mono-cond} can be written for stage
  $i^\bullet+r$ as follows
  \begin{equation*}
    2\Delta(B + B'r\Delta) > A' r \Delta + o(r\Delta).
  \end{equation*}
  Thus, in the limit $\Delta\to 0$, for
  $r> l := \mathrm{ceil}(2 B / A')$, Eq.~\eqref{eq:mono-cond}
  will not be fulfilled by constraint $k$. Using the construction of
  $\Delta_\mathrm{thr}$ in the proof of Theorem~\ref{theo:1}, one can
  next rule out all the other constraints at all stages.
\end{proof}

We now construct the ``perturbation strip'' by defining some
boundaries. See Fig.~\ref{fig:illus-box} for an illustration.

\begin{definition}
  Define states $(\kappa_i)_{i\in[i^\bullet+1,i^\bullet+l+1]}$ by
  \begin{equation*}
    \begin{aligned}
      \kappa_{i^\bullet+1}&:= x^\beta_{i^\bullet+1}, \\
      \kappa_i &:= \min ({T_{i-1}}^{\beta}(\kappa_{i-1}), x^\beta_i), \; i=i^\bullet+2, \dots, i^\bullet+l+1.
    \end{aligned}
  \end{equation*}
  Next, define $(\lambda_i)_{i\in[i^\bullet+1,i^\bullet+l+1]}$ by
  \begin{equation*}
    \begin{aligned}
      \lambda_{i^\bullet + 1} &= \kappa_{i^\bullet+1}, \\
      \lambda_i &= \min (T_{i-1}^{\alpha}(\kappa_{i-1}), x^\beta_i), \; i=i^\bullet+2, \dots, i^\bullet+l+1.
    \end{aligned}
  \end{equation*}

  Finally, define $(\mu_i)_{i\in[i^\bullet+1,i^\bullet+l+1]}$ as the
  highest profile that can be obtained by repeated applications of
  $T^\beta$ and that remains below the $(\lambda_i)$.
\end{definition}

\begin{figure}[htp]
  \centering
  \begin{tikzpicture}
    \node[anchor=south west,inner sep=0] (image) at (0,0)
    {\includegraphics[]{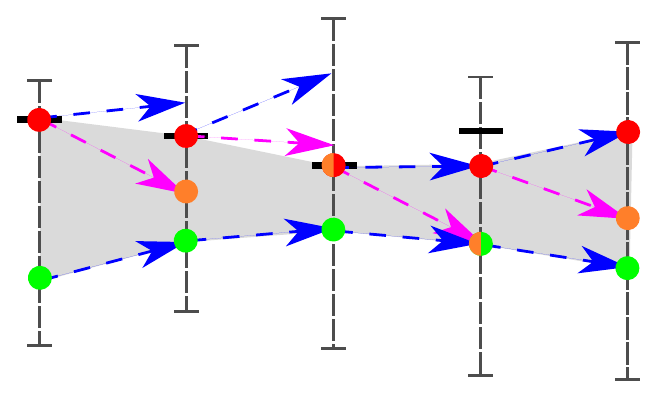}};
    \begin{scope}[x={(image.south east)},y={(image.north west)}]
      \node [axes label, anchor=north, text centered] at (0.05, 0.0) {$i^\bullet+1$};
      \node [axes label, anchor=north, text centered] at (0.28, 0.0) {$i^\bullet+2$};
      \node [axes label, anchor=north, text centered] at (0.5, 0.0) {$i^\bullet+3$};
      \node [axes label, anchor=north, text centered] at (0.72, 0.0) {$i^\bullet+4$};
      \node [axes label, anchor=north, text centered] at (0.95, 0.0) {$i^\bullet+5$};
      \node [axes label, anchor=west, text centered] at (0.8, 0.67) {$\kappa_{i^\bullet+5}$};
      \node [axes label, anchor=west, text centered] at (0.8, 0.45) {$\lambda_{i^\bullet+5}$};
      \node [axes label, anchor=west, text centered] at (0.8, 0.30) {$\mu_{i^\bullet+5}$};
      \node [axes label, anchor=west, text centered] at (0.57, 0.72) {$x^{\beta}_{i^\bullet+4}$};
    \end{scope}
  \end{tikzpicture}
  \caption{\label{fig:illus-box} The ``perturbation strip'' contains
    three vertical boundaries: $(\kappa_{i})$ [red dots],
    $(\lambda_{i})$ [orange dots] and $(\mu_{i})$ [green dots]. The
    states $(x^{\beta}_{i})$ [thick horizontal black lines] and the
    controllable sets $(\mathcal{K}_{i})$ [vertical intervals] are
    both shown.}
\end{figure}

The $(\kappa_i)$ and $(\mu_i)$ form respectively the upper and the
lower boundaries of the ``perturbation strip''. Before going further,
let us establish some estimates on the size of the strip.

\begin{lem}
  \label{lem:boxbound}
  There exist constants $C_\kappa$ and $C_\mu$ such that, for all
  $i\in[i^\bullet+1,i^\bullet+l+1]$,
  \begin{equation}
    \label{eq:kappa}
    \max\calK_{i^\bullet+1} - \kappa_i \leq l C_\kappa
    \Delta,
  \end{equation}
  \begin{equation}
    \label{eq:mu}
    \max\calK_{i^\bullet+1} - \mu_i \leq l C_\mu \Delta.
  \end{equation}  
\end{lem}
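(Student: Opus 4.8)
The plan is to bound the total "drop" of each boundary sequence across the at-most-$l$ stages of the perturbation strip by showing that each single-step drop is $O(\Delta)$, so that after $l$ steps the accumulated drop is $lO(\Delta)$. First I would handle the $(\kappa_i)$ boundary. By construction $\kappa_{i^\bullet+1}=x^\beta_{i^\bullet+1}$ and $\kappa_i=\min(T_{i-1}^\beta(\kappa_{i-1}),x^\beta_i)$, so the one-step decrease $\kappa_{i-1}-\kappa_i$ is at most $\max(\kappa_{i-1}-T_{i-1}^\beta(\kappa_{i-1}),\,\kappa_{i-1}-x^\beta_i)$. The first term is $-2\Delta\beta_{i-1}(\kappa_{i-1})$, which is $O(\Delta)$ because the controls $\beta_{i-1}(x)$ are bounded uniformly on the compact range of admissible states and positions (the coefficients $\bar{\vec a},\bar{\vec b},\bar{\vec c}$ are piece-wise $\mathcal C^1$ by Assumption~\ref{assum:1}, hence the polygons $\Omega_i$ lie in a fixed bounded box). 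The second term, $\kappa_{i-1}-x^\beta_i$, is controlled by the Taylor-expansion estimate from the proof of Lemma~\ref{lem:perturbation}: since $x^\beta_i$ is the threshold state at which the active-constraint angle equals $\cot^{-1}(1/(2\Delta))$, and the relevant constraint has $\bar{\vec a}(s)[k]=A'(s-s^\bullet)+o(\cdot)$, one finds $x^\beta_i = x^\beta_{i^\bullet+1}+O(\Delta)$ uniformly for $i$ in the strip. Combining, $\kappa_{i-1}-\kappa_i \leq C_\kappa\Delta$ for a constant $C_\kappa$ independent of $\Delta$ and $i$, and summing over the at-most-$l$ indices gives $\max\calK_{i^\bullet+1}-\kappa_i = (x^\beta_{i^\bullet+1}-\kappa_i)+(\max\calK_{i^\bullet+1}-x^\beta_{i^\bullet+1})$; but on the strip $x^\beta_{i^\bullet+1}$ need not equal $\max\calK_{i^\bullet+1}$, so I would absorb that gap too into an $O(\Delta)$ estimate using the same threshold analysis, yielding \eqref{eq:kappa}.

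Next I would treat $(\mu_i)$, the lower boundary. The key relation is $\lambda_i=\min(T_{i-1}^\alpha(\kappa_{i-1}),x^\beta_i)$, and $\mu_i$ is the highest $T^\beta$-generated profile staying below the $\lambda_i$. The gap $\kappa_i-\lambda_i$ is at most $\kappa_{i-1}-T_{i-1}^\alpha(\kappa_{i-1}) - (\kappa_{i-1}-\kappa_i) \le -2\Delta\alpha_{i-1}(\kappa_{i-1})+ (\kappa_{i-1}-\kappa_i)$, where the first term is again $O(\Delta)$ by uniform boundedness of $\alpha$. Hence $\lambda_i \ge \kappa_i - O(\Delta) \ge \max\calK_{i^\bullet+1} - lC_\kappa\Delta - O(\Delta)$. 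Finally I would show $\mu_i \ge \lambda_i - O(l\Delta)$: each step of the $T^\beta$ recursion defining $\mu$ can fall below $\lambda$ only by the one-step spread $|T^\beta(x)-x| \le 2\Delta\sup|\beta_i| = O(\Delta)$ before being re-clipped, and the $\lambda_i$ themselves vary by $O(\Delta)$ per step; so after $l$ steps $\mu_i$ is within $O(l\Delta)$ of $\lambda_i$. Chaining the three estimates (strip width $\kappa$-to-$\lambda$ is $O(l\Delta)$, $\lambda$-to-$\mu$ is $O(l\Delta)$, and $\kappa_i$ itself has dropped $O(l\Delta)$ from $\max\calK_{i^\bullet+1}$) gives a constant $C_\mu$ with \eqref{eq:mu}.

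The main obstacle I anticipate is making the "$x^\beta_i = x^\beta_{i^\bullet+1}+O(\Delta)$ uniformly on the strip" claim fully rigorous: $x^\beta_i$ is defined implicitly through the rotating-normal / angle-threshold construction in Lemma~\ref{lem:1-basic}, and one must verify that on the length-$l\Delta$ window around $s^\bullet$ the threshold state moves continuously and at rate $O(1)$ in $s$ (equivalently $O(\Delta)$ per stage). This needs the Taylor data $\bar{\vec a}(s)[k]=A'(s-s^\bullet)+o(\cdot)$, $\bar{\vec b}(s)[k]=B+O(s-s^\bullet)$ from Lemma~\ref{lem:perturbation} together with the fact that $l=\mathrm{ceil}(2B/A')$ is a \emph{fixed} integer, so the window shrinks linearly in $\Delta$; the constants $A',B$ then enter $C_\kappa,C_\mu$ explicitly. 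A secondary nuisance is bookkeeping the re-clipping by $x^\beta_i$ in the definitions of $\kappa_i$ and $\mu_i$ — one must argue the $\min$ with $x^\beta_i$ can only \emph{reduce} the drop relative to the unclipped recursion, so the estimates above remain valid upper bounds. Everything else is a routine triangle-inequality telescoping argument over at most $l$ stages.
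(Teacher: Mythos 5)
Your overall strategy --- telescoping per-stage drops of size $O(\Delta)$ over the at most $l$ stages of the strip --- is the same as the paper's. The interesting divergence is in how the clipping by $x^\beta_i$ is controlled. You propose to show $x^\beta_i = x^\beta_{i^\bullet+1} + O(\Delta)$ uniformly by Taylor-expanding the zero-inertia constraint, and you rightly flag this as the delicate step. The paper sidesteps it entirely with a purely geometric observation: by the very definition of $x^\beta_i$, the active edge of the polygon $\Omega_i\cap(\bbR\times\calK_i)$ at that state makes angle $\gamma=\cot^{-1}(1/(2\Delta))$ with the vertical, so the vertical gap from $x^\beta_i$ up to the apex $\max\calK_i$ is at most the horizontal extent of the polygon (bounded by the control bound $C$) times $\tan\gamma = 2\Delta$, i.e.\ at most $2C\Delta$. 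This single estimate supplies both the initialization $\max\calK_{i^\bullet+1}-\kappa_{i^\bullet+1}\le 2C\Delta$ and the bound on the per-stage drop when the $\min$ clips at $x^\beta_i$, with no need to track how $x^\beta_i$ itself moves from stage to stage. Your Taylor route could likely be completed, but it requires exactly the uniformity argument you anticipate; the paper's version is shorter and independent of the local geometry of the constraint coefficients.

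Two points in your write-up would need repair. First, the aside that the $\min$ with $x^\beta_i$ ``can only reduce the drop relative to the unclipped recursion'' is backwards: clipping lowers $\kappa_i$ and therefore \emph{enlarges} $\max\calK_{i^\bullet+1}-\kappa_i$; it is precisely the clipped branch that needs the $O(\Delta)$ bound (your main argument does supply one, so only the aside is wrong). Second, for $\mu$ your ``fall below $\lambda$ and get re-clipped'' picture does not match the definition: $\mu$ is a single unclipped $T^\beta$-orbit, namely the highest one remaining below $(\lambda_i)$ at every stage. The paper closes this by observing that, by maximality, such an orbit must \emph{touch} $\lambda$ at some index $p$ (otherwise it could be raised), whence $\kappa_p-\mu_p=\kappa_p-\lambda_p\le 2C\Delta$, and the $O(\Delta)$ per-step variation of $\mu$ propagates this bound to all $i$ in the strip. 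Your final bound is the right one, but the touching-point observation is the justification your sketch is missing.
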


\begin{proof}
  Let $C$ be the upper-bound of the absolute values of all admissible
  controls $\alpha,\beta$ over whole segment.  One has
  \[
  \calK_{i^\bullet+1} - \kappa_{i^\bullet+1} = \calK_{i^\bullet+1} -
  x^\beta_{i^\bullet+1} \leq
  \]
  \[(\beta_i(x^\beta_{i^\bullet+1}) -
  \alpha_i(x^\beta_{i^\bullet+1}))\tan(\gamma) \leq 2C\Delta.
  \]
  Next, by definition of $\kappa$, one can see that the difference
  between two consecutive $\kappa_i,\kappa_{i+1}$ is bounded by
  $2C\Delta$. This shows Eq.~\eqref{eq:kappa}.

  Since $(\mu_{i})$ is the highest profile below $(\lambda_{i})$, there exists
  one index $p$ such that $\mu_p=\lambda_p$. Thus,
  $\kappa_p-\mu_p = \kappa_p-\lambda_p \leq 2C\Delta$, where the last
  inequality comes from the definition of $\lambda$. Remark finally
  that the difference between two consecutive $\mu_i,\mu_{i+1}$ is also
  bounded by $2C\Delta$. This shows Eq.~\eqref{eq:mu}.
\end{proof}

We now establish the fundamental properties of the ``perturbation strip''.

\begin{lem}[and definition]
  \label{lem:boxprop}
  Let $J_i^*(x)$ denote TOPP-RA's  cost-to-go: the cost of the profile
  produced by TOPP-RA starting from $x$ at the $i$-stage, and
  $J_{i}^{\dagger}(x)$ the optimal cost-to-go.
  \begin{enumerate}
  \item[(a)] In the interval $[\min \mathcal{K}_{i}, \mu_i]$,
    $J_i^*(x)$ equals $J_i^\dagger(x)$ and
    is non-increasing;
  \item[(b)] For all $i \in [i^\bullet + 1, \dots, i^\bullet + l$], 
  \begin{equation}
    \label{eq:9}
    x \in [\mu_i , \kappa_i] \implies T_i^\dagger(x), T_i^\beta(x) \in [ \mu_{i+1} , \kappa_{i+1}],
  \end{equation}
  where $T^\dagger_i(x)$ is the optimal transition.
  \end{enumerate}
\end{lem}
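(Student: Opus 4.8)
The plan is to prove the three pieces in the order: mapping lemma $\Rightarrow$ (a) $\Rightarrow$ ($T_i^\dagger$-part of (b)), the pivot being a single auxiliary claim (the ``mapping lemma''): for every stage $i$ in the strip, $T_i^\beta$ sends $[\min\calK_i,\mu_i]$ into $[\min\calK_{i+1},\mu_{i+1}]$ and sends $[\mu_i,\kappa_i]$ into $[\mu_{i+1},\kappa_{i+1}]$. The only analytic input needed for this is Lemma~\ref{lem:1-basic}: since $\mu_i\leq\lambda_i\leq x^\beta_i$ and $\kappa_i\leq x^\beta_i$ by construction, both intervals lie in $[\min\calK_i,x^\beta_i]$, on which $T_i^\beta$ is non-decreasing. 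So $T_i^\beta$ is monotone there, and it only remains to match endpoints against the recursive definitions of $\kappa$ and $\mu$: $T_i^\beta(\mu_i)=\mu_{i+1}$ by construction of $\mu$ as the highest $T^\beta$-orbit below $(\lambda_i)$, and $T_i^\beta(\kappa_i)$ versus $\kappa_{i+1}=\min(T_i^\beta(\kappa_i),x^\beta_{i+1})$. I expect the one genuinely fiddly point to be this last comparison when the cap $x^\beta_{i+1}$ is active; it is settled by noting that $x^\beta$ is non-decreasing in $i$ \emph{within} the strip — a consequence of the Taylor expansion used in the proof of Lemma~\ref{lem:perturbation}, where $\bar{\vec a}(s)[k]$ vanishes linearly at $s^\bullet$ so the non-monotone region contracts stage by stage — together with the $O(l\Delta)$ width estimates of Lemma~\ref{lem:boxbound}, which confine $\mu_i,\lambda_i,\kappa_i,x^\beta_i$ to a band of width $O(l\Delta)$ that transitions displace by only $O(\Delta)$.

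Given the mapping lemma, part (a) follows by replaying the domination argument of Lemma~\ref{lem:main} inside the ``good region''. Fix a strip stage $i$ and $x\leq\mu_i$; let $(x^*_j)_{j\geq i}$ be the TOPP-RA run from $x$ and $(x_j)_{j\geq i}$ any admissible run from the same $x$. Using the first half of the mapping lemma together with $x^*_{j+1}=\min(T_j^\beta(x^*_j),\max\calK_{j+1})$ and $\mu_{j+1}\leq\max\calK_{j+1}$, an easy induction gives $x^*_j\leq\mu_j$ for every $j$ still in the strip; past the strip, Lemma~\ref{lem:perturbation} makes $T^\beta$ non-decreasing on all of $\calK_j$. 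Hence at every stage $x^*_j$ sits where $T_j^\beta$ is non-decreasing, so the induction of Lemma~\ref{lem:main} carries over verbatim and yields $x^*_j\geq x_j$ for all $j$; since then $x_j\leq x^*_j\leq\mu_j\leq x^\beta_j$, the competitor also stays in the monotone region, so the induction is self-consistent. Pointwise velocity domination gives $\sum_j\Delta/\sqrt{x^*_j}\leq\sum_j\Delta/\sqrt{x_j}$, i.e. $J_i^*(x)=J_i^\dagger(x)$; and comparing the two TOPP-RA runs from $x\leq x'\leq\mu_i$ in the same way shows the first dominates the second pointwise, hence $J_i^*(x)\geq J_i^*(x')$, which is the claimed monotonicity.

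For part (b), the assertion about $T_i^\beta$ is exactly the second half of the mapping lemma, so only the statement about the optimal transition $T_i^\dagger$ is left. The upper bound is immediate: any admissible transition obeys $T_i^\dagger(x)\leq T_i^\beta(x)\leq\kappa_{i+1}$. For the lower bound, suppose $x\in[\mu_i,\kappa_i]$ but $T_i^\dagger(x)<\mu_{i+1}$. The one-step reachable set from $x$ is the interval $[T_i^\alpha(x),T_i^\beta(x)]$, and $T_i^\beta(x)\geq T_i^\beta(\mu_i)=\mu_{i+1}$ by the mapping lemma; so either $T_i^\alpha(x)>\mu_{i+1}$ — impossible, as then the whole reachable interval, hence $T_i^\dagger(x)$, would exceed $\mu_{i+1}$ — or $\mu_{i+1}$ is itself reachable from $x$. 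In the latter case, by part (a) at stage $i+1$ (already established, since (a) holds at every strip stage) $J_{i+1}^\dagger$ is non-increasing on $[\min\calK_{i+1},\mu_{i+1}]$, so steering to $\mu_{i+1}$ is at least as cheap as steering to $T_i^\dagger(x)<\mu_{i+1}$; thus an optimal transition may be taken with $T_i^\dagger(x)\geq\mu_{i+1}$, giving $T_i^\dagger(x)\in[\mu_{i+1},\kappa_{i+1}]$.

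The main obstacle is therefore not the comparison arguments, which are all variants of Lemma~\ref{lem:main}, but the bookkeeping around the caps in the definitions of $\kappa,\lambda,\mu$: one must verify that capping at $x^\beta$ never lets the strip ``leak'', i.e. that $T_i^\beta(\kappa_i)\leq x^\beta_{i+1}$ (and the analogue for $\lambda$) throughout $[i^\bullet+1,i^\bullet+l]$. This is where the fine strip geometry supplied by Lemmas~\ref{lem:perturbation} and~\ref{lem:boxbound} — linear vanishing of $\bar{\vec a}[k]$ at $s^\bullet$, monotonicity of $x^\beta$ over the $O(l)$ strip stages, $O(\Delta)$-sized transition steps — has to be invoked, and it is the portion of the argument I would write out in full.
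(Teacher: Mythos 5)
Your proposal is correct in substance and follows the paper's overall decomposition: monotonicity of $T_i^\beta$ on $[\min\calK_i,x^\beta_i]$ from Lemma~\ref{lem:1-basic}, endpoint matching against the recursive definitions of $\kappa$ and $\mu$ (in particular $T_i^\beta(\mu_i)=\mu_{i+1}$), and a contradiction via part (a) for the $T_i^\dagger$ claim. Where you genuinely diverge is part (a). The paper proves it by \emph{backward} induction on the cost-to-go, writing $J^*_i(x)=\Delta/\sqrt{x}+J^*_{i+1}(T_i^\beta(x))$ and composing a non-increasing $J^*_{i+1}$ with a non-decreasing $T_i^\beta$; you instead rerun the \emph{forward} pointwise-domination induction of Lemma~\ref{lem:main} localized to the monotone region, checking that both TOPP-RA's run and any admissible competitor started at $x\le\mu_i$ remain below $(\mu_j)$ through the strip and are unconstrained by non-monotonicity afterwards. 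Both arguments work; yours reuses existing machinery and makes the self-consistency of the induction explicit, while the paper's backward induction more directly delivers the monotonicity of $J^\dagger_{i+1}$ that part (b) consumes. Your handling of the $T_i^\dagger$ lower bound is essentially the paper's contradiction argument, phrased slightly more carefully: by taking the optimal transition at or above $\mu_{i+1}$ you avoid the strict inequality $J^\dagger_{i+1}(T_i^\dagger(x))>J^\dagger_{i+1}(\mu_{i+1})$ that the paper asserts but that ``non-increasing'' alone does not justify.

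The one step you defer is the upper bound $T_i^\beta(x)\le\kappa_{i+1}$ when the cap is active, i.e.\ when $\kappa_{i+1}=x^\beta_{i+1}<T_i^\beta(\kappa_i)$: monotonicity only yields $T_i^\beta(x)\le T_i^\beta(\kappa_i)$, which may exceed $x^\beta_{i+1}$. You flag this correctly, but the fix you sketch --- that $i\mapsto x^\beta_i$ is non-decreasing across the strip --- is not by itself sufficient; what is actually needed is a quantitative statement of the form $x^\beta_{i+1}\ge T_i^\beta(x^\beta_i)$, i.e.\ that the non-monotone region recedes at least as fast as a $\beta$-step advances, which must be extracted from the Taylor expansion in Lemma~\ref{lem:perturbation}. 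Since the paper itself dispatches this point with ``clear from the definition of $\kappa$,'' your treatment is not below the paper's own level of rigour, but this deferred comparison is the only part of your plan that is not already routine and would need to be written out.
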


\begin{proof}
  (a) We use backward induction from $i^\bullet + l$ to
  $i^\bullet + 1$. 

  Initialization: One has
  $x \leq \mu_{i^\bullet + l} \leq \lambda_{i^\bullet + l} \leq
  x^\beta_{i^\bullet + l}$.
  It follows that $T_{i^\bullet + l}^{\beta}(x)$ is non-decreasing
  over the interval
  $[\min \mathcal{K}_{i^\bullet+l}, \mu_{i^\bullet+l}]$.

  As there is no constraint verifying Eq.~\eqref{eq:mono-cond} at
  stages $i=i^\bullet + l + 1, \dots, N$, the cost-to-go
  $J^*_{i^\bullet + l + 1}(x)$ is non-increasing and equals
  the optimal cost-to-go $J^\dagger_{i^\bullet + l + 1}(x)$
  by Theorem~\ref{theo:1}. Choosing the greedy control at the
  $i^\bullet+l$-stage is therefore optimal.  Next, note that
  \begin{equation}
    \label{eq:4b}
    J^{*}_{i^\bullet + l}(x) = \frac{\Delta}{\sqrt{x}}
    + J^{*}_{i^\bullet + l + 1}(T_{i^\bullet + l}^\beta(x)),
  \end{equation}
  since $J^{*}_{i^\bullet + l+1}(x)$ and
  $T_{i^\bullet + l}^{\beta}(x)$ are non-increasing and non-decreasing
  respectively over
  $[\min \mathcal{K}_{i^\bullet+l+1}, \mu_{i^\bullet+l+1}]$ and
  $[\min \mathcal{K}_{i^\bullet+l}, \mu_{i^\bullet+l}]$, it follows
  that $J^{*}_{i^\bullet + l}(x)$ is non-increasing over the
  interval $[\min \mathcal{K}_{i^\bullet + l}, \mu_{i^\bullet+l}]$.

  Induction: Suppose the hypothesis is true for
  $i+1 \in \{i^\bullet+2,\dots,i^\bullet+l\}$. Since
  $x \leq \mu_i \leq \lambda_i \leq x^\beta_i$, one has that
  $T_i^\beta(x) \leq \mu_{i+1}$ and that $T_i^{\beta}(x)$ is
  non-decreasing over the interval $[\min\mathcal{K}_i, \mu_i]$. Note
  that
  \begin{equation}
    \label{eq:5}
    J^*_{i}(x) = \frac{\Delta}{\sqrt{x}}
    + J^*_{i + 1}(T_{i}^\beta(x)).
  \end{equation}
  By the induction hypothesis, $J^{*}_{i + 1}(T_{i}^\beta(x))$
  is non-increasing, it then follows that $J^{*}_{i}(x) $ is
  non-decreasing and that $\beta_{i}(x)$ is the optimal control and thus
  $J_{i}^{*}(x) = J_{i}^\dagger(x)$.

  (b) The part that $T_i^\dagger(x), T_i^\beta(x) \leq \kappa_{i+1}$
  is clear from the definition of $\kappa$. We first show
  $\mu_{i+1} \leq T_i^\beta(x)$.

  Suppose first $x \leq x^\beta_i$. Then $T_i^\beta$ is
  non-decreasing in $[\mu_i,x]$, which implies $T_i^\beta(x) \geq
  T_i^\beta(\mu_i) = \mu_{i+1}$. 

  Suppose now $x \geq x^\beta_i$. One can choose a step size
  $\Delta$ such that $x^\alpha_i < x^\beta_i$, which implies
  $x > x^\alpha_i$. One then has
  $T_i^\beta(x) \geq T_i^\alpha(x)\geq T_i^\alpha(x^\beta_i) \geq
  \lambda_{i+1} \geq \mu_{i+1}$.

  Finally, to show that $\mu_{i+1} \leq T_i^\dagger(x)$, we reason by
  contradiction. Suppose  $T_i^\dagger(x) < \mu_{i+1}$. By (a),
  $J^\dagger_{i+1}$ is non-increasing below $\mu_{i+1}$,
  thus $J^\dagger_{i+1}(T_i^\dagger(x)) >
  J^\dagger_{i+1}(\mu_{i+1})$ (*). On the other hand, since
  $T_i^\dagger(x) < \mu_{i+1} \leq T_i^\beta(x)$, there exists an
  admissible control that steers $x$ towards $\mu_{i+1}$. Since
  $T_i^\dagger$ is the true optimal transition from $x$,
  $J^\dagger_{i+1}(\mu_{i+1}) \geq
  J^\dagger_{i+1}(T_i^\dagger(x))$. This contradicts (*).
\end{proof}

We are now ready to prove Theorem~\ref{theo:2}.

\begin{proof}[Proof of Theorem~\ref{theo:2}]

  Recall that $(\dot s_0^2=x^*_0, \dots, x^*_N)$ is the profile
  returned by TOPP-RA and
  $(\dot s_0^2 = x^\dagger_0, \dots, x^\dagger_N)$ is the true optimal
  profile. By definition of the time-optimal cost functions, we can
  expand the initial costs $J_0^*(\dot{s}_0^2)$ and
  $J_0^\dagger(\dot{s}_0^2)$ into three terms as follows
  \begin{equation}
    \label{eq:vstar}
    J_0^*(\dot{s}_0^2) =
    \sum_{i=0}^{i^\bullet}\frac{\Delta}{\sqrt{x_i^*}} + 
    \sum_{i=i^\bullet+1}^{i^\bullet+l}\frac{\Delta}{\sqrt{x_i^*}}
    + J_{i^\bullet+l+1}^*(x_{i^\bullet + l+1}^*),
  \end{equation}
  and 
  \begin{equation}
    \label{eq:vdagger}
    J_0^\dagger(\dot{s}_0^2) =
    \sum_{i=0}^{i^\bullet}\frac{\Delta}{\sqrt{x_i^\dagger}} + 
    \sum_{i=i^\bullet+1}^{i^\bullet+l}\frac{\Delta}{\sqrt{x_i^\dagger}}
    + J_{i^\bullet+l+1}^\dagger(x_{i^\bullet + l+1}^\dagger),
  \end{equation}

  (a) Applying Theorem~\ref{theo:1}, for small enough $\Delta$, one can
  show that
  \begin{equation}
    \label{eq:8}
    \forall i \in [0, \dots, i^\bullet+1],\ x^*_i \geq x^\dagger_i.
  \end{equation}
  Thus, the first term of $J_0^*(\dot{s}_0^2)$ is smaller
  than the first term of $J_0^\dagger(\dot{s}_0^2)$.

  (b) Suppose
  $x^{*}_{i^{\bullet}+1}, x^{\dagger}_{i^{\bullet}+1} \in [\mu_{i^\bullet+1},
  \kappa_{i^\bullet+1}]$. From Lemma~\ref{lem:boxprop}(b), one has for all
  $i\in[i^\bullet + 1, \dots, i^\bullet + l]$,
  $x^*_i,x^\dagger_i\in [\mu_i , \kappa_i]$. Thus, using the estimates
  of Lemma~\ref{lem:boxbound}, the second terms can be bounded as follows 
  \[
  \sum_{i=i^\bullet+1}^{i^\bullet+l}\frac{\Delta}{\sqrt{x_i^*}}
  \leq \frac{l\Delta}{\sqrt{\max\mathcal{K}_{i^\bullet+1} -
      C_\mu\Delta l}},\ \mathrm{and}
  \]
  \[
  \sum_{i=i^\bullet+1}^{i^\bullet+l}\frac{\Delta}{\sqrt{x_i^\dagger}}
  \geq \frac{l\Delta}{\sqrt{\max\mathcal{K}_{i^\bullet+1} +
      C_\kappa\Delta l}}.
  \]
  Thus
  \[
 \sum_{i=i^\bullet+1}^{i^\bullet+l}\frac{\Delta}{\sqrt{x_i^*}} -
 \sum_{i=i^\bullet+1}^{i^\bullet+l}\frac{\Delta}{\sqrt{x_i^\dagger}}
 \leq
 \frac{(C_\mu+C_\kappa) \Delta^2l^2}{2\sqrt{\max \mathcal{K}_{i^\bullet+1} - C_\mu\Delta l}}.
  \]

  If
  $x^*_{i^{\bullet}+1}, x^\dagger_{i^{\bullet}+1} <
  \mu_{i^{\bullet}+1}$, by Lemma~\ref{lem:boxprop}(a) it is easy to
  see that $J_{0}^*(\dot s_{0}^{2}) = J_{0}^\dagger(\dot
  s_{0}^{2})$. 

  If $x^*_{i^{\bullet}+1} \geq \mu_{i^{\bullet}+1} >
  x^\dagger_{i^{\bullet}+1}$, then by Lemma~\ref{lem:boxprop},
  $x^*_{i^{\bullet}+l+1}\geq \mu_{i^{\bullet}+l+1} >
  x^\dagger_{i^{\bullet}+l+1}$, which implies next that
  $J_{0}^*(\dot s_{0}^{2}) < J_{0}^\dagger(\dot s_{0}^{2})$, which
  is impossible. 

  (c) Regarding the third terms, observe that, by applying
  Theorem~\ref{theo:1} over $[i^{\bullet}+l+1,\dots,N]$, one has
   $J^{*}_{i^{\bullet}+l+1}(x) = J^{\dagger}_{i^{\bullet}+l+1}(x)$ for
  all $x\in\mathcal{K}_{i^{\bullet}+l+1}$. Thus
  \[
  J^*_{i^\bullet+l+1}(x^*_{i^\bullet+l+1}) -
  J^\dagger_{i^\bullet+l+1}(x^\dagger_{i^\bullet+l+1}) = 
  \]
  \[
  J^\dagger_{i^\bullet+l+1}(x^*_{i^\bullet+l+1}) -
  J^\dagger_{i^\bullet+l+1}(x^\dagger_{i^\bullet+l+1}) \leq
  \]
  \[
  C_{J^\dagger} |x^*_{i^\bullet+l+1}-x^\dagger_{i^\bullet+l+1}| \leq
  C_{J^\dagger}(C_\mu+C_\kappa)\Delta l,
  \]
  where $C_{J^\dagger}$ is the Lipshitz constant of $J^\dagger$.
  
  Grouping together the three estimates (a), (b), (c) leads to the
  conclusion of the theorem.
\end{proof}

\subsection{Error analysis for different discretization schemes}
\label{sec:discr-scheme-error}

\subsubsection{First-order interpolation scheme}

In the main text the collocation scheme was presented to discretize
the constraints.  Before analyzing the errors, we introduce another
scheme: first-order interpolation.

In this scheme, at stage $i$ we require $(u_{i}, x_{i})$ and
$(u_{i}, x_{i} + 2 \Delta_{i} u_{i})$ to satisfy the constraints at $s=s_i$ and
$s=s_{i+1}$ respectively. That is, for $i=0,\dots,N-1$
\begin{equation}
  \label{eq:first-order-inerpo}
  \begin{aligned}
    \begin{bmatrix}
      \vec a(s_i)  \\
      \vec a(s_{i+1}) + 2 \Delta\vec b(s_{i+1})
   \end{bmatrix} u
   +
   \begin{bmatrix}
     \mathbf{b}(s_i)  \\
     \mathbf{b}(s_{i+1})  \\
   \end{bmatrix}
   x+
   \begin{bmatrix}
     \mathbf{c}(s_i)   \\
     \mathbf{c}(s_{i+1})   \\
   \end{bmatrix} \in
    \\
   \begin{bmatrix}
     \cC(s_i)\\
     \cC(s_{i+1})
   \end{bmatrix}
  \end{aligned}
\end{equation}
At $i=N$, one uses only the top half of the above equations. By
appropriately rearranging the terms, the above equations can finally
be rewritten as
\begin{equation}
  \label{eq:gen-form2}
    \mathbf{a}_i u + \mathbf{b}_i x+ \mathbf{c}_i \in \cC_i.
\end{equation}

\subsubsection{Error analysis}
\label{sec:error-analysis}
For simplicity, suppose Assumption~\ref{assum:1} holds. That is, there
exists
$\tilde{\vec a}(s)_{s\in [0, 1]}, \tilde{\vec b}(s)_{s\in [0, 1]},
\tilde{\vec c}(s)_{s\in [0, 1]}$ which define the set of admissible
control-state pairs. 

On the interval $[s_0, s_1]$, the parameterization is given by
\begin{equation*}
  x(s; u_0, x_0) = x_0 + 2 s u_0,
\end{equation*}
where $x_0$ is the state at $s_0$ and $u_0$ is the constant control
along the interval.
Additionally, note that $s_0 = 0, s_1=\Delta$.

The constraint satisfaction \emph{function} is defined by
\begin{equation}
 \label{eq:err-f}
\vec{\epsilon}(s):= u_{0}\tilde{\vec a}(s) + x(s)\tilde{\vec b}(s) +
\tilde{\vec c}(s).
\end{equation}
The \emph{greatest} constraint satisfaction error over
$[s_{0}, s_{1}]$ can be given as
\begin{equation*}
  \max\left\{\max_{k,s\in[s_{0}, s_{1}]}\vec \epsilon(s)[k], 0\right\}.
\end{equation*}
 
Different discretization schemes enforce different conditions on
$\vec \epsilon(s)$.  In particular, we have
\begin{itemize}
\item \emph{collocation scheme}: $\vec \epsilon(s_0) \leq 0$;

\item \emph{first-order interpolation scheme}: $\vec \epsilon(s_0)
  \leq 0$, $\vec \epsilon(s_{1}) \leq 0$. 
\end{itemize}

Using the classic result on Error of Polynomial
Interpolation~\cite[Theorem 2.1.4.1]{Stoer1982}, we obtain the
following estimation of $\vec \epsilon(s)$ for the \emph{collocation
  scheme}:
\begin{equation*}
  \vec \epsilon(s) = \vec \epsilon(s_0) + (s - s_0) \vec \epsilon'(\xi) = s \vec \epsilon'(\xi),
\end{equation*}
for $\xi \in [s_0, s]$. Suppose the derivatives of
$\tilde{\vec a}(s), \tilde{\vec b}(s), \tilde{\vec c}(s)$ are bounded
we have
\begin{equation*}
  \max_{s\in[s_{0}, s_{1}]}\vec \epsilon(s) = O(\Delta).
\end{equation*}
Thus the greatest constraint satisfaction error of the collocation
discretization scheme has order $O(\Delta)$.

Using the same theorem, we obtain the following estimation of
$\vec\epsilon(s)$ for the \emph{first-order interpolation scheme}:
\begin{equation*}
  \begin{aligned}
  \vec \epsilon(s) = &\vec \epsilon(s_0) + (s - s_0)\frac{\vec \epsilon(s_{1}) - \vec \epsilon(s_0)}{s_1-s_0}
   +\frac{(s-s_0)(s - s_{1}) \vec \epsilon''(\xi)}{2!} \\
  =& \frac{s(s - \Delta) \vec \epsilon''(\xi)}{2!},
  \end{aligned}
\end{equation*}
for some $\xi \in [s_0, s_1]$. Again, since the derivatives of
$\tilde{\vec a}(s), \tilde{\vec b}(s), \tilde{\vec c}(s)$ are assumed
to be bounded, we see that
\begin{equation*}
  \max_{s\in[s_{0}, s_{1}]}\vec \epsilon(s) = O(\Delta^2).
\end{equation*}
Thus the greatest constraint satisfaction error of the first-order
interpolation discretization scheme has order $O(\Delta^{2})$.

\bibliographystyle{IEEEtran}
\bibliography{library}

\end{document}